\def\eps{\varepsilon}
\def\BayesExp{\mbox{BayesExp}}
\newcommand{\assref}[1]{\hyperref[#1]{\autoref*{ass:gamma}\ref*{#1}}}
\titleformat{\section}{\large\bf\raggedright}{\thesection}{1em}{\MakeUppercase{#1}}
\titleformat{\subsection}{\normalsize\bf\raggedright}{\thesubsection}{1em}{\MakeUppercase{#1}}
\title{Thompson Sampling is Asymptotically Optimal in General Environments}
\author{
	{\bf Jan Leike} \\
	Australian National University \\
	\href{mailto:jan.leike@anu.edu.au}{jan.leike@anu.edu.au}
	\And
	{\bf Tor Lattimore} \\
	University of Alberta \\
	\href{mailto:tor.lattimore@gmail.com}{tor.lattimore@gmail.com}
	\And
	{\bf Laurent Orseau} \\
	Google DeepMind \\
	\href{mailto:lorseau@google.com}{lorseau@google.com}
	\And
	{\bf Marcus Hutter} \\
	Australian National University \\
	\href{mailto:marcus.hutter@anu.edu.au}{marcus.hutter@anu.edu.au}
}
\begin{document}

\maketitle

\begin{abstract}%
We discuss a variant of Thompson sampling for
nonparametric reinforcement learning in
a countable classes of general stochastic environments.
These environments can be non-Markov, non-ergodic, and partially observable.
We show that Thompson sampling learns the environment class
in the sense that
(1) asymptotically its value converges to the optimal value in mean and
(2) given a recoverability assumption regret is sublinear.
\end{abstract}

{\bf Keywords.}
General reinforcement learning,
Thompson sampling,
asymptotic optimality,
regret,
discounting,
recoverability,
AIXI.

\section{Introduction}
\label{sec:introduction}

In reinforcement learning (RL)
an agent interacts with an unknown environment
with the goal of maximizing rewards.
Recently reinforcement learning has received a surge of interest,
triggered by its success in applications
such as simple video games~\cite{MKSRV+:2015deepQ}.
However, theory is lagging behind application
and most theoretical analyses has been done
in the bandit framework and
for Markov decision processes (MDPs).
These restricted environment classes fall short of
the full reinforcement learning problem and
theoretical results usually assume ergocity and
visiting every state infinitely often.
Needless to say, these assumptions are not satisfied
for any but the simplest applications.

Our goal is to lift these restrictions;
we consider \emph{general reinforcement learning},
a top-down approach to RL
with the aim to understand the fundamental underlying problems
in their generality.
Our approach to general RL is \emph{nonparametric}:
we only assume that
the true environment belongs to a given countable environment class.

We are interested in agents that maximize rewards \emph{optimally}.
Since the agent does not know the true environment in advance,
it is not obvious what optimality should mean.
We discuss two different notions of optimality:
\emph{asymptotic optimality} and \emph{worst-case regret}.

\emph{Asymptotic optimality}
requires that asymptotically the agent learns to act optimally,
i.e., that the discounted value of the agent's policy $\pi$
converges to the optimal discounted value,
$V^*_\mu - V^\pi_\mu \to 0$ for all environments $\mu$
from the environment class.
This convergence is impossible for deterministic policies
since the agent has to explore infinitely often and for long stretches of time,
but there are policies that converge almost surely in Cesàro average~\cite{LH:2011opt}.
Bayes-optimal agents are generally not asymptotically optimal%
~\cite{Orseau:2013}.
However, asymptotic optimality can be achieved through
an exploration component on top of a Bayes-optimal agent%
~\cite[Ch.~5]{Lattimore:2013} or through optimism~\cite{SH:2015opt}.

Asymptotic optimality in mean is essentially a weaker variant of
\emph{probably approximately correct} (PAC)
that comes without a concrete convergence rate:
for all $\varepsilon > 0$ and $\delta > 0$
the probability that our policy is $\varepsilon$-suboptimal
converges to zero (at an unknown rate).
Eventually this probability will be less than $\delta$.
Since our environment class can be very large and non-compact,
concrete PAC/convergence rates are likely impossible.

\emph{Regret} is how many expected rewards the agent forfeits by
not following the best informed policy.
Different problem classes have different regret rates,
depending on the structure and the difficulty of the problem class.
Multi-armed bandits provide a (problem-independent)
worst-case regret bound of $\Omega(\sqrt{KT})$
where $K$ is the number of arms~\cite{BCB:2012bandits}.
In Markov decision processes (MDPs)
the lower bound is $\Omega(\sqrt{DSAT})$
where $S$ is the number of states, $A$ the number of actions, and
$D$ the diameter of the MDP~\cite{AJO:2009MDPs}.
For a countable class of environments given by
state representation functions that map histories to MDP states,
a regret of $\tilde O(T^{2/3})$ is achievable
assuming the resulting MDP is weakly communicating~\cite{NMRO:2013}.
A problem class is considered \emph{learnable}
if there is an algorithm that has a sublinear regret guarantee.

This paper continues a narrative that started
with definition of the Bayesian agent AIXI~\cite{Hutter:2000} and
the proof that it satisfies various optimality guarantees~\cite{Hutter:2002}.
Recently it was revealed that
these optimality notions are trivial or subjective~\cite{LH:2015priors}:
a Bayesian agent does not explore enough to lose the prior's bias,
and a particularly bad prior can make the agent conform to
any arbitrarily bad policy as long as this policy yields
some rewards.
These negative results put the Bayesian approach to (general) RL into question.
In this paper we remedy the situation
by showing that using Bayesian techniques an agent
can indeed be optimal in an objective sense.

The agent we consider is known as \emph{Thompson sampling},
\emph{posterior sampling}, or
the \emph{Bayesian control rule}~\cite{Thompson:1933}.
It samples an environment $\rho$ from the posterior,
follows the $\rho$-optimal policy for one effective horizon
(a lookahead long enough to encompass most of the discount function's mass),
and then repeats.
We show that this agent's policy is asymptotically optimal in mean
(and, equivalently, in probability).
Furthermore, using a recoverability assumption on the environment,
and some (minor) assumptions on the discount function,
we prove that the worst-case regret is sublinear.
This is the first time convergence and regret bounds
of Thompson sampling have been shown under
such general conditions.

Thompson sampling was originally proposed by Thompson as a bandit algorithm~\cite{Thompson:1933}.
It is easy to implement and often achieves quite good results%
~\cite{CH:2011Thompson}.
In multi-armed bandits
it attains optimal regret~\cite{AG:2011Thompson,KKM:2012Thompson}.
Thompson sampling has also been considered for MDPs:
as model-free method relying on distributions over $Q$-functions
with convergence guarantee~\cite{DFR:1998Thompson}, and
as a model-based algorithm without theoretical analysis~\cite{Strens:2000}.
Bayesian and frequentist regret bounds have also been established~\cite{ORR:2013thompson,OVR:2014,GS:2015thompson}.
PAC guarantees have been established
for an optimistic variant of Thompson sampling for MDPs%
~\cite{ALLNW:2009Thompson}.

For general RL Thompson sampling
was first suggested in \cite{OB:2010Thompson} with resampling at every time step.
The authors prove that the action probabilities of Thompson sampling
converge to the action probability of the optimal policy almost surely,
but require a finite environment class and
two (arguably quite strong)
technical assumptions on the behavior of the posterior distribution
(akin to ergodicity)
and the similarity of environments in the class.
Our convergence results do not require these assumptions,
but we rely on an (unavoidable) recoverability assumption for our regret bound.

\autoref{app:notation} contains a list of notation and
\autoref{app:omitted-proofs} contains omitted proofs.

\section{Preliminaries}
\label{sec:preliminaries}


The set $\X^* := \bigcup_{n=0}^\infty \X^n$ is
the set of all finite strings over the alphabet $\X$ and
the set $\X^\infty$ is the set of all infinite strings
over the alphabet $\X$.
The empty string is denoted by $\epsilon$, not to be confused
with the small positive real number $\eps$.
Given a string $x \in \X^*$, we denote its length by $|x|$.
For a (finite or infinite) string $x$ of length $\geq k$,
we denote with $x_{1:k}$ the first $k$ characters of $x$,
and with $x_{<k}$ the first $k - 1$ characters of $x$.

The notation $\Delta\mathcal{Y}$ denotes
the set of probability distributions over $\mathcal{Y}$.


In reinforcement learning,
an agent interacts with an environment in cycles:
at time step $t$ the agent chooses an \emph{action} $a_t \in \A$ and
receives a \emph{percept} $e_t = (o_t, r_t) \in \E$
consisting of an \emph{observation} $o_t \in \O$
and a real-valued \emph{reward} $r_t$;
the cycle then repeats for $t + 1$.
We assume that rewards are bounded between $0$ and $1$ and
that the set of actions $\A$ and the set of percepts $\E$ are finite.

We fix a \emph{discount function}
$\gamma: \mathbb{N} \to \mathbb{R}$ with
$\gamma_t \geq 0$ and $\sum_{t=1}^\infty \gamma_t < \infty$.
Our goal is to maximize discounted rewards $\sum_{t=1}^\infty \gamma_t r_t$.
The \emph{discount normalization factor} is defined as
$\Gamma_t := \sum_{k=t}^\infty \gamma_k$.
The \emph{effective horizon} $H_t(\eps)$ is a horizon
that is long enough to encompass all but an $\eps$
of the discount function's mass:
\begin{equation}\label{eq:effective-horizon}
H_t(\eps) := \min \{ k \mid \Gamma_{t+k} / \Gamma_t \leq \eps \}
\end{equation}

A \emph{history} is an element of $\H$.
We use $\ae \in \A \times \E$ to denote one interaction cycle,
and $\ae_{<t}$ to denote a history of length $t - 1$.
We treat action, percepts, and histories
both as outcomes and as random variables.
A \emph{policy} is a function $\pi: \H \to \Delta\A$
mapping a history $\ae_{<t}$ to a distribution over the actions
taken after seeing this history;
the probability of action $a$ is denoted $\pi(a \mid \ae_{<t})$.
An \emph{environment} is a function $\nu: \H \times \A \to \Delta\E$
mapping a history $\ae_{<t}$ and an action $a_t$ to
a distribution over the percepts generated after this history;
the probability of percept $e$ is denoted $\nu(e \mid \ae_{<t} a_t)$.

A policy $\pi$ and an environment $\nu$ generate
a probability measure $\nu^\pi$ over infinite histories $(\A \times \E)^\infty$,
defined by its values on the cylinder sets
$\{ h \in (\A \times \E)^\infty \mid h_{<t} = \ae_{<t} \}$:
\[
   \nu^\pi(\ae_{<t})
:= \prod_{k=1}^{t-1} \pi(a_k \mid \ae_{<k}) \nu(e_k \mid \ae_{<k} a_k)
\]

When we take an expectation $\EE^\pi_\nu$ of a random variable $X_t(\ae_{<t})$
this is to be understood as the expectation of the history $\ae_{<t}$
for a fixed time step $t$
drawn from $\nu^\pi$, i.e.,
\[
   \EE_\nu^\pi[ X_t(\ae_{<t}) ]
:= \sum_{\ae_{<t}} \nu^\pi(\ae_{<t}) X_t(\ae_{<t}).
\]
We often do not explicitly add the subscript $t$ to time-dependent random variables.

\begin{definition}[Value Function]
\label{def:value-function}
The \emph{value} of a policy $\pi$ in an environment $\nu$
given history $\ae_{<t}$ is defined as
\[
   V_\nu^\pi(\ae_{<t})
:= \frac{1}{\Gamma_t}
     \EE_\nu^\pi\left[ \sum_{k=t}^\infty \gamma_k r_k \,\middle|\, \ae_{<t} \right],
\]
if $\Gamma_t > 0$ and $V^\pi_\nu(\ae_{<t}) := 0$ if $\Gamma_t = 0$.
The \emph{optimal value} is defined as
$V^*_\nu(h) := \sup_\pi V^\pi_\nu(h)$.
\end{definition}

The normalization constant $1/\Gamma_t$
ensures that values are bounded between $0$ and $1$.
We also use the \emph{truncated value function}
\[
   V_\nu^{\pi,m}(\ae_{<t})
:= \frac{1}{\Gamma_t}
     \EE_\nu^\pi\left[ \sum_{k=t}^m \gamma_k r_k \,\middle|\, \ae_{<t} \right].
\]

For each environment $\mu$ there is an \emph{optimal policy} $\pi^*_\mu$
that takes an \emph{optimal action} for each history~\cite[Thm.\ 10]{LH:2014discounting}:
\[
\pi^*_\mu(a_t \mid \ae_{<t}) > 0
\;\Longrightarrow\;
a_t \in \argmax_a V^*_\mu(\ae_{<t} a)
\]

Let $\M$ denote a countable class of environments.
We assume that $\M$ is large enough to contain the true environment
(e.g.\ the class of all computable environments).
Let $w \in \Delta\M$ be a prior probability distribution on $\M$ and
let
\[
\xi := \sum_{\nu \in \M} w(\nu) \nu
\]
denote the corresponding Bayesian mixture over the class $\M$.
After observing the history $\ae_{<t}$
the prior $w$ is updated to the posterior
\[
w(\nu \mid \ae_{<t}) := w(\nu) \frac{\nu(\ae_{<t})}{\xi(\ae_{<t})}.
\]
We also use the notation
$w(\M' \mid \ae_{<t}) := \sum_{\nu \in \M'} w(\nu \mid \ae_{<t})$
for a set of environments $\M' \subseteq \M$.
Likewise we define
$\nu(A \mid \ae_{<t}) := \sum_{h \in A} \nu(h \mid \ae_{<t})$
for a prefix-free set of histories $A \subseteq \H$.


Let $\nu, \rho \in \M$ be two environments,
let $\pi_1, \pi_2$ be two policies, and
let $m \in \mathbb{N}$ be a lookahead time step.
The \emph{total variation distance} is defined as
\begin{align*}
D_m(\nu^{\pi_1}, \rho^{\pi_2} \mid \ae_{<t}) &:= \\
\sup_{A \subseteq (\A \times \E)^m} &\Big|
  \nu^{\pi_1}(A \mid \ae_{<t}) - \rho^{\pi_2}(A \mid \ae_{<t})
\Big|.
\end{align*}
with $D_\infty(\nu^{\pi_1}, \rho^{\pi_2} \mid \ae_{<t}) :=
\lim_{m \to \infty} D_m(\nu^{\pi_1}, \rho^{\pi_2} \mid \ae_{<t})$.


\begin{lemma}[Bounds on Value Difference]
\label{lem:value-bound}
For any policies $\pi_1$, $\pi_2$, any environments $\rho$ and $\nu$,
and any horizon $t \leq m \leq \infty$,
\[
       |V^{\pi_1,m}_\nu(\ae_{<t}) - V^{\pi_2,m}_\rho(\ae_{<t})|
~\leq~ D_m(\nu^{\pi_1}, \rho^{\pi_2} \mid \ae_{<t})
\]
\end{lemma}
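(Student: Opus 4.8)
The plan is to bound the value difference by comparing expected discounted reward sums under the two measures $\nu^{\pi_1}$ and $\rho^{\pi_2}$ on the cylinder sets of length $m$. Write out the definition: since both value functions share the normalization $1/\Gamma_t$ and the reward at step $k$ depends only on the history through time $k$, we have
\[
V^{\pi_1,m}_\nu(\ae_{<t}) - V^{\pi_2,m}_\rho(\ae_{<t})
= \frac{1}{\Gamma_t} \sum_{k=t}^m \gamma_k \Big( \EE^{\pi_1}_\nu[r_k \mid \ae_{<t}] - \EE^{\pi_2}_\rho[r_k \mid \ae_{<t}] \Big).
\]
Each conditional expectation $\EE^{\pi_i}_\cdot[r_k \mid \ae_{<t}]$ is an integral of the bounded function $r_k \in [0,1]$ — which is a function of the length-$m$ history prefix (in fact only of the length-$k \leq m$ prefix) — against the respective measure restricted to histories of length $m$ conditioned on $\ae_{<t}$.

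The key step is the standard inequality that for a measurable function $f$ with values in $[0,1]$ and two probability measures $P, Q$ on the same space, $|\EE_P[f] - \EE_Q[f]| \leq \sup_A |P(A) - Q(A)|$, the total variation distance. Applying this with $f = r_k$, $P = \nu^{\pi_1}(\cdot \mid \ae_{<t})$, and $Q = \rho^{\pi_2}(\cdot \mid \ae_{<t})$ on the space $(\A\times\E)^m$ gives
\[
\Big| \EE^{\pi_1}_\nu[r_k \mid \ae_{<t}] - \EE^{\pi_2}_\rho[r_k \mid \ae_{<t}] \Big|
\leq D_m(\nu^{\pi_1}, \rho^{\pi_2} \mid \ae_{<t})
\]
for every $k$ with $t \leq k \leq m$. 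Substituting this uniform bound into the display above and pulling it out of the sum yields
\[
|V^{\pi_1,m}_\nu(\ae_{<t}) - V^{\pi_2,m}_\rho(\ae_{<t})|
\leq \frac{1}{\Gamma_t} \sum_{k=t}^m \gamma_k \cdot D_m(\nu^{\pi_1}, \rho^{\pi_2} \mid \ae_{<t})
\leq D_m(\nu^{\pi_1}, \rho^{\pi_2} \mid \ae_{<t}),
\]
since $\sum_{k=t}^m \gamma_k \leq \sum_{k=t}^\infty \gamma_k = \Gamma_t$.

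The $m = \infty$ case follows by taking limits: $V^{\pi_i,\infty}_\cdot = V^{\pi_i}_\cdot$ by definition, the left-hand side converges (dominated convergence, rewards bounded and $\Gamma_t < \infty$), and $D_m \to D_\infty$ by definition, so the inequality is preserved in the limit. I do not expect any real obstacle here — the only mild subtlety is being careful that $r_k$ as a function on $(\A\times\E)^m$ is measurable with respect to the relevant cylinder $\sigma$-algebra and that conditioning on $\ae_{<t}$ is handled consistently on both sides (so that the same event class $A \subseteq (\A\times\E)^m$ indexes the supremum defining $D_m$); once the telescoping over $k$ is set up, the bound is immediate from $\sum_k \gamma_k \leq \Gamma_t$.
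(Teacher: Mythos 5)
Your proof is correct and follows essentially the same route as the paper: the core is the standard bound $|\EE_P[f]-\EE_Q[f]|\leq\sup_A|P(A)-Q(A)|$ for $f\in[0,1]$ applied to the measures $\nu^{\pi_1}(\cdot\mid\ae_{<t})$ and $\rho^{\pi_2}(\cdot\mid\ae_{<t})$ on $(\A\times\E)^m$, which the paper establishes via Radon--Nikodym derivatives. The only cosmetic difference is that you apply it per reward term $r_k$ and then use $\sum_{k=t}^m\gamma_k\leq\Gamma_t$, whereas the paper applies it once to the single $[0,1]$-valued random variable $\frac{1}{\Gamma_t}\sum_{k=t}^m\gamma_k r_k$; both are fine, including your limiting argument for $m=\infty$.
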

\begin{proof}
See \autoref{app:omitted-proofs}.
\end{proof}

\section{Thompson Sampling is Asymptotically Optimal}
\label{sec:thompson-sampling}

Strens proposes following the optimal policy for one episode or
``related to the number of state transitions
the agent is likely to need to plan ahead''~\cite{Strens:2000}.
We follow Strens' suggestion and resample at the effective horizon.

Let $\eps_t$ be a monotone decreasing sequence of positive reals such that
$\eps_t \to 0$ as $t \to \infty$.
We define our Thompson-sampling policy $\pi_T$ in \autoref{alg:Thompson-sampling}.
\begin{algorithm}
\begin{center}
\begin{algorithmic}[1]
\While{true}
	\State sample $\rho \sim w(\,\cdot \mid \ae_{<t})$
	\State follow $\pi^*_\rho$ for $H_t(\eps_t)$ steps
\EndWhile
\end{algorithmic}
\end{center}
\caption{Thompson sampling policy $\pi_T$}
\label{alg:Thompson-sampling}
\end{algorithm}

Note that $\pi_T$ is a stochastic policy
since we occasionally sample from a distribution.
We assume that this sampling is independent of everything else.

\begin{definition}[Asymptotic Optimality]
\label{def:asymptotic-optimality}
A policy $\pi$ is
\emph{asymptotically optimal in an environment class $\M$}
iff for all $\mu \in \M$
\begin{equation}\label{eq:asymptotic-optimality}
V^*_\mu(\ae_{<t}) - V^\pi_\mu(\ae_{<t})
\to 0
\text{ as $t \to \infty$}
\end{equation}
on histories drawn from $\mu^\pi$.
\end{definition}

There are different types of asymptotic optimalities
based on the type of stochastic convergence in \eqref{eq:asymptotic-optimality}.
If this convergence occurs almost surely,
it is called \emph{strong asymptotic optimality}~\cite[Def.~7]{LH:2011opt};
if this convergence occurs in mean,
it is called \emph{asymptotic optimality in mean};
if this convergence occurs in probability,
it is called \emph{asymptotic optimality in probability}; and
if the Cesàro averages converge almost surely,
it is called \emph{weak asymptotic optimality}~\cite[Def.~7]{LH:2011opt}.

\subsection{Asymptotic Optimality in Mean}
\label{ssec:ao-in-mean}

This subsection is dedicated to proving the following theorem.

\begin{theorem}[Thompson Sampling is Asymptotically Optimal in Mean]
\label{thm:taxi-ao}
For all environments $\mu \in \M$,
\[
\EE_\mu^{\pi_T} \big[ V^*_\mu(\ae_{<t}) - V^{\pi_T}_\mu(\ae_{<t}) \big] \to 0
\text{ as $t \to \infty$.}
\]
\end{theorem}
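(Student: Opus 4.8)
The plan is to show that the Thompson-sampling policy $\pi_T$ tracks the value of whichever environment it has currently sampled, and that this sampled environment is, with high posterior weight, close in total variation (over the relevant effective horizon) to the true environment $\mu$. Concretely, fix $\mu \in \M$ and let $t$ be the start of a resampling episode, so that $\pi_T$ has just drawn $\rho \sim w(\cdot \mid \ae_{<t})$ and will follow $\pi^*_\rho$ for $H := H_t(\eps_t)$ steps. I would decompose
\[
V^*_\mu(\ae_{<t}) - V^{\pi_T}_\mu(\ae_{<t})
\;\le\;
\underbrace{V^*_\mu - V^{*,H}_\mu}_{(\mathrm{I})}
+ \underbrace{V^{*,H}_\mu - V^{\pi^*_\rho, H}_\mu}_{(\mathrm{II})}
+ \underbrace{V^{\pi^*_\rho,H}_\mu - V^{\pi_T,H}_\mu}_{(\mathrm{III})}
\]
(all evaluated at $\ae_{<t}$), where term $(\mathrm{I})$ is the discount tail and is at most $\eps_t$ by the definition of the effective horizon \eqref{eq:effective-horizon}, term $(\mathrm{II})$ is the error from following the $\rho$-optimal rather than the $\mu$-optimal policy, and term $(\mathrm{III})$ accounts for the fact that after $H$ steps $\pi_T$ resamples and no longer follows $\pi^*_\rho$ — but since $(\mathrm{III})$ only concerns the truncated value up to horizon $H$ and $\pi_T$ agrees with $\pi^*_\rho$ on exactly those $H$ steps, term $(\mathrm{III})$ is in fact $0$. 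So the whole bound reduces to controlling $\eps_t + (\mathrm{II})$, and $\eps_t \to 0$ by assumption.

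For term $(\mathrm{II})$, I would use \autoref{lem:value-bound}: both $V^{*,H}_\mu$ and $V^{\pi^*_\rho,H}_\mu$ can be related, via the lemma, to the distance $D_H(\mu^{\pi^*_\rho}, \rho^{\pi^*_\rho} \mid \ae_{<t})$, because $\pi^*_\rho$ is optimal \emph{in $\rho$}, so $V^{*,H}_\rho = V^{\pi^*_\rho, H}_\rho$, and the two value functions $V^{*,H}_\mu$, $V^{\pi^*_\rho,H}_\mu$ differ from their $\rho$-counterparts each by at most $D_H(\mu^{\cdot}, \rho^{\cdot} \mid \ae_{<t})$ for the appropriate policies. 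Carefully: $V^{*,H}_\mu(\ae_{<t}) - V^{\pi^*_\rho,H}_\mu(\ae_{<t}) \le \big(V^{\pi^*_\mu,H}_\mu - V^{\pi^*_\mu,H}_\rho\big) + \big(V^{\pi^*_\mu,H}_\rho - V^{\pi^*_\rho,H}_\rho\big) + \big(V^{\pi^*_\rho,H}_\rho - V^{\pi^*_\rho,H}_\mu\big)$, where the middle bracket is $\le 0$ since $\pi^*_\rho$ is $\rho$-optimal, and the outer two are each $\le D_H(\mu^{\pi}, \rho^{\pi} \mid \ae_{<t})$ for $\pi = \pi^*_\mu$ and $\pi = \pi^*_\rho$ respectively, hence bounded by $2 \sup_\pi D_H(\mu^\pi, \rho^\pi \mid \ae_{<t})$. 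Thus $(\mathrm{II}) \le 2 \sup_\pi D_{H_t(\eps_t)}(\mu^\pi, \rho^\pi \mid \ae_{<t})$.

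It remains to take the expectation $\EE_\mu^{\pi_T}$ and show it vanishes. Since $\rho$ is drawn from the posterior, the expected value of $\sup_\pi D_{H}(\mu^\pi, \rho^\pi \mid \ae_{<t})$ over $\rho \sim w(\cdot\mid \ae_{<t})$ is $\sum_{\nu} w(\nu \mid \ae_{<t}) \sup_\pi D_H(\mu^\pi, \nu^\pi \mid \ae_{<t})$; I expect this to be the crux of the argument. The strategy is a merging/absolute-continuity argument: on $\mu$-histories the posterior concentrates on environments that are predictively indistinguishable from $\mu$ over any fixed finite lookahead, so one needs a bound of the form $\EE_\mu^{\pi_T}[\sum_\nu w(\nu\mid\ae_{<t}) \sup_\pi D_{H_t(\eps_t)}(\mu^\pi,\nu^\pi \mid \ae_{<t})] \to 0$. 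The natural tool is that the posterior-weighted KL divergence (or Hellinger distance) between the on-policy next-step predictive distributions of $\mu$ and of the mixture $\xi$ is summable along the realized history (a standard consequence of $\mu \ll \xi$ with $w(\mu) > 0$), which by Pinsker and the chain rule for total variation over $H$ steps controls $D_H$, provided $H_t(\eps_t)$ does not grow too fast relative to how quickly these per-step divergences decay. The main obstacle is precisely this interplay: the effective horizon $H_t(\eps_t) \to \infty$, so we are asking for the total variation over a growing window to go to zero, and we must argue — perhaps via a subsequence/Cesàro or a Borel–Cantelli style argument on the rare ``bad'' episodes where the sampled $\rho$ is far from $\mu$, combined with the fact that posterior mass on such $\rho$ is eventually small — that the growth of the horizon is out-paced by the merging of the posterior. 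Handling the case $\Gamma_t = 0$ (where values are defined to be $0$ and the statement is trivial) and passing from ``$t$ at an episode boundary'' to arbitrary $t$ (using that within an episode the value changes by a controlled amount, again via \autoref{lem:value-bound} with the horizon shrinking appropriately) are routine cleanup steps.
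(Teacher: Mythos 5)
Your decomposition into discount tail, sampled-vs-true value error, and episode-boundary error is close in spirit to how the paper proceeds, and the use of \autoref{lem:value-bound} for term $(\mathrm{II})$ is fine. But the step you yourself flag as the crux is where the plan breaks: you reduce $(\mathrm{II})$ to $2\sup_\pi D_{H}(\mu^\pi,\rho^\pi\mid\ae_{<t})$ and then hope that the posterior concentrates on environments that are predictively indistinguishable from $\mu$ \emph{under all policies}. That is false in general. Blackwell--Dubins / posterior merging only gives on-policy merging: what vanishes (in Bayes-expectation) is $D_\infty(\nu^{\pi_T},\xi^{\pi_T}\mid\ae_{<t})$, i.e.\ the distance along the measure actually being realized by $\pi_T$ (this is exactly \autoref{lem:info-gain}). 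The posterior can retain mass forever on environments that differ from $\mu$ arbitrarily much under policies that $\pi_T$ rarely follows; the paper itself points this out in the discussion (``Thompson sampling only explores on the optimal policies, and in some environment classes this will not yield off-policy prediction''). So the quantity $\sum_\nu w(\nu\mid\ae_{<t})\sup_\pi D_{H}(\mu^\pi,\nu^\pi\mid\ae_{<t})$ need not go to zero, and no Pinsker/chain-rule refinement will rescue it.

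The missing idea is how to pass from on-policy merging to a bound on $D_m(\mu^{\pi^*_\nu},\rho^{\pi^*_\nu}\mid\ae_{<t})$ for the two specific policies you need ($\nu=\mu$ and $\nu=\rho$). The paper does this by first using martingale convergence of the posterior to extract a finite random set $\M'$ of environments whose posterior weight stays bounded below by some $w_{\min}>0$ (with $\mu\in\M'$), and then observing that with probability at least $w(\nu\mid\ae_{<t_0})\,w(\nu\mid\ae_{<t_1})\ge w_{\min}^2$ Thompson sampling draws the same $\nu\in\M'$ at two consecutive resampling points, hence follows $\pi^*_\nu$ across the whole window $[t,\,t+H_t(\eps_t)]$ even when $t$ lies strictly inside an episode. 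This yields $w_{\min}^2\, D_m(\mu^{\pi^*_\nu},\rho^{\pi^*_\nu}\mid\ae_{<t})\le D_m(\mu^{\pi_T},\rho^{\pi_T}\mid\ae_{<t})$, and the right-hand side is controlled by the triangle inequality through $\xi^{\pi_T}$ together with \autoref{lem:info-gain} and a Markov-inequality argument on the posterior. Note also that this device simultaneously handles the ``intermediate $t$'' case you dismissed as routine cleanup (the effective-horizon window straddles the next resampling point, so $\pi_T$ does \emph{not} agree with $\pi^*_\rho$ on all of it), and that your worry about the horizon $H_t(\eps_t)$ growing too fast relative to per-step KL decay is a red herring: the paper simply bounds $D_m\le D_\infty$ and uses merging of the infinite-horizon total variation, so no growth condition on the horizon is needed for this theorem (only for the regret bound).
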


This theorem immediately implies that
Thompson sampling is also asymptotically optimal in probability:
The convergence in mean of the random variables
$X_t := V^*_\mu(\ae_{<t}) - V^{\pi_T}_\mu(\ae_{<t})$
stated in \autoref{thm:taxi-ao}
is equivalent to convergence in probability in the sense that
$\mu^{\pi_T}[ X_t > \varepsilon ] \to 0$ as $t \to \infty$
for all $\varepsilon > 0$
because the random variables $X_t$ are nonnegative and bounded.
However, this does not imply almost sure convergence
(see \autoref{ssec:sao}).

Define the \emph{Bayes-expected total variation distance}
\[
   F^\pi_m(\ae_{<t})
:= \sum_{\rho \in \M} w(\rho \mid \ae_{<t}) D_m(\rho^\pi, \xi^\pi \mid \ae_{<t})
\]
for $m \leq \infty$.

If we replace the distance measure $D_m$ by cross-entropy, then
the quantity $F^\pi_m(\ae_{<t})$ becomes the
Bayes-expected \emph{information gain}~\cite[Eq.~3.5]{Lattimore:2013}.

For the proof of \autoref{thm:taxi-ao} we need the following lemma.

\begin{lemma}[F Vanishes On-Policy]
\label{lem:info-gain}
For any policy $\pi$ and any environment $\mu$,
\[
\EE_\mu^\pi[F^\pi_\infty(\ae_{<t})] \to 0
\text{ as $t \to \infty$.}
\]
\end{lemma}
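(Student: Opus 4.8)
The plan is to show that the Bayes-expected total variation distance $F^\pi_\infty(\ae_{<t})$, which measures how much the posterior-averaged environments $\rho$ disagree with the mixture $\xi$ over the infinite future under policy $\pi$, vanishes in $\mu$-expectation. The key structural fact is that $\xi^\pi = \sum_\rho w(\rho)\rho^\pi$ as measures over infinite histories, so $\xi$ is a convex combination of the $\rho$'s with the \emph{prior} weights, while the posterior weights $w(\rho \mid \ae_{<t})$ arise precisely from conditioning this mixture on $\ae_{<t}$. The intuition is a martingale-convergence / merging-of-opinions argument: as data accumulates, the posterior concentrates (in the sense relevant here, the posterior predictive $\xi(\cdot \mid \ae_{<t})$ merges with $\mu(\cdot \mid \ae_{<t})$ on $\mu$-typical histories), and hence each posterior-likely $\rho$ must also predict like $\xi$ on the future.

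The main steps I would carry out:
First, I would reduce the infinite-horizon distance to finite horizons: since $\sum_t \gamma_t < \infty$ is not directly relevant here but $D_m \uparrow D_\infty$ monotonically, for any $\delta$ there is an $m$ (depending on $t$) capturing $D_\infty$ up to $\delta$; more carefully, I would fix a horizon $n$ and bound $D_\infty(\rho^\pi,\xi^\pi \mid \ae_{<t})$ by $D_{t+n}(\rho^\pi,\xi^\pi\mid\ae_{<t})$ plus a tail term, then let $n\to\infty$ at the end. Second — the crux — I would relate $\EE_\mu^\pi[D_{t+n}(\rho^\pi,\xi^\pi\mid\ae_{<t})]$ averaged over the posterior to an \emph{information-theoretic} quantity. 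Using Pinsker's inequality, $D_{t+n}(\rho^\pi,\xi^\pi\mid\ae_{<t})^2 \le \tfrac12 \mathrm{KL}(\rho^\pi(\cdot\mid\ae_{<t}) \,\|\, \xi^\pi(\cdot\mid\ae_{<t}))$ restricted to the next $n$ steps, and this KL divergence, when summed against the posterior $w(\rho\mid\ae_{<t})$ and telescoped over time, is exactly the expected information gain, whose total over all $t$ is bounded by $\EE_\mu^\pi[-\log w(\mu)] < \infty$ (the standard Bayesian regret / cumulative-information-gain bound, cf.\ the reference to \cite[Eq.~3.5]{Lattimore:2013}). A summable nonnegative sequence has vanishing terms, giving the result after handling the square via Jensen.

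The hard part will be making the telescoping/summability argument precise while the resampling horizon is \emph{time-varying} and the quantity inside is a supremum over events (total variation) rather than an additive reward, and in particular controlling the interchange of the $n\to\infty$ limit with the $t\to\infty$ limit — i.e.\ showing the finite-horizon approximation is uniform enough in $t$. I expect to handle this by first proving the finite-horizon statement $\EE_\mu^\pi[F^\pi_{t+n}(\ae_{<t})]\to 0$ for each fixed $n$ via the information-gain bound, then arguing that the error $\EE_\mu^\pi[F^\pi_\infty - F^\pi_{t+n}]$ is itself small using that a summable information-gain tail (from step $t+n$ onward, which goes to $0$ as $t\to\infty$ uniformly in $n$) controls it, and combining via a standard $\eps/2$ argument.
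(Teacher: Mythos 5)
Your plan hinges on the claim that the cumulative Bayes-expected information gain is bounded by $\EE_\mu^\pi[-\log w(\mu)]<\infty$, and that summability then forces the terms (and hence the tails controlling $F^\pi_\infty$) to vanish. That quantitative claim is the gap: the bound $-\log w(\mu)$ applies to the cumulative KL divergence between the \emph{true} environment and the mixture, $\sum_t \EE_\mu^\pi\big[\mathrm{KL}\big(\mu(\cdot\mid\ae_{<t}a_t)\,\|\,\xi(\cdot\mid\ae_{<t}a_t)\big)\big]$, not to the posterior-averaged quantity $\sum_\rho w(\rho\mid\ae_{<t})\,\mathrm{KL}(\rho\,\|\,\xi)$ that appears in $F^\pi$. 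The latter, summed over $t$ under $\xi^\pi$, telescopes to the mutual information between the environment index and the history, which is bounded only by the prior entropy $\Ent(w)$ (and under $\mu^\pi$ you additionally pay a factor $1/w(\mu)$ via dominance). For a countable class $\M$ the prior entropy can be infinite, and \autoref{lem:info-gain} makes no finite-entropy assumption, so the summability you rely on simply need not hold; without it, "summable nonnegative terms vanish" is unavailable, and the $\eps/2$ argument for interchanging the $n\to\infty$ and $t\to\infty$ limits has nothing to rest on. (Your structural identity is otherwise sound: by Pinsker and the chain rule, the Bayes-expected future KL at time $t$ is exactly the $\xi$-expected information gain from $t$ onward, so your route does go through \emph{if} one adds the assumption $\Ent(w)<\infty$ — but that proves a strictly weaker statement than the lemma.)

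The paper avoids information gain entirely. It fixes one environment $\nu\in\M$ at a time and invokes the Blackwell--Dubins merging theorem: since $\xi^\pi\geq w(\nu)\nu^\pi$, absolute continuity holds and $D_\infty(\nu^\pi,\xi^\pi\mid\ae_{<t})\to 0$ $\nu^\pi$-almost surely, hence in mean. Then the identity $w(\nu\mid\ae_{<t})\,\xi^\pi(\ae_{<t})=w(\nu)\,\nu^\pi(\ae_{<t})$ converts the $\xi^\pi$-expectation of the posterior-weighted sum into the prior-weighted sum of the \emph{on-policy} expectations $\EE_\nu^\pi[D_\infty(\nu^\pi,\xi^\pi\mid\ae_{<t})]$, each of which vanishes; dominated convergence (the distances are bounded by $1$ and the prior weights sum to $1$) and the dominance $\EE_\mu^\pi\leq\frac{1}{w(\mu)}\EE_\xi^\pi$ finish the proof. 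This argument needs no entropy bound, no Pinsker step, and no finite-horizon truncation, which is why it works at the full generality of the lemma. If you want to salvage your approach, you would have to either assume $\Ent(w)<\infty$ or replace the summability argument with a merging-type argument — at which point you are essentially back to the paper's proof.
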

\begin{proof}
See \autoref{app:omitted-proofs}.
\end{proof}

\begin{proof}[Proof of \autoref{thm:taxi-ao}]
Let $\beta, \delta > 0$ and
let $\eps_t > 0$ denote the sequence
used to define $\pi_T$ in \autoref{alg:Thompson-sampling}.
We assume that $t$ is large enough such that
$\eps_k \leq \beta$ for all $k \geq t$ and that
$\delta$ is small enough such that $w(\mu \mid \ae_{<t}) > 4\delta$ for all $t$,
which holds since $w(\mu \mid \ae_{<t}) \not\to 0$
$\mu^\pi$-almost surely for any policy $\pi$~\cite[Lem.~3i]{Hutter:2009MDL}.

The stochastic process $w(\nu \mid \ae_{<t})$
is a $\xi^{\pi_T}$-martingale since
\begingroup
\allowdisplaybreaks
\begin{align*}
&\phantom{=}~\,
   \EE_\xi^{\pi_T}[ w(\nu \mid \ae_{1:t}) \mid \ae_{<t} ] \\
&= \sum_{a_t e_t} \xi^{\pi_T}(\ae_t \mid \ae_{<t})
     w(\nu) \frac{\nu^{\pi_T}(\ae_{1:t})}{\xi^{\pi_T}(\ae_{1:t})} \\
&= \sum_{a_t e_t} \xi^{\pi_T}(\ae_t \mid \ae_{<t})
     w(\nu \mid \ae_{<t}) \frac{\nu^{\pi_T}(\ae_t \mid \ae_{<t})}{\xi^{\pi_T}(\ae_t \mid \ae_{<t})} \\
&= w(\nu \mid \ae_{<t}) \sum_{a_t e_t} \nu^{\pi_T}(\ae_t \mid \ae_{<t}) \\
&= w(\nu \mid \ae_{<t}).
\end{align*}
\endgroup
By the martingale convergence theorem~\cite[Thm.~5.2.8]{Durrett:2010}
$w(\nu \mid \ae_{<t})$ converges $\xi^{\pi_T}$-almost surely and
because $\xi^{\pi_T} \geq w(\mu) \mu^{\pi_T}$
it also converges $\mu^{\pi_T}$-almost surely.

We argue that we can choose $t_0$
to be one of $\pi_T$'s resampling time steps large enough such that
for all $t \geq t_0$
the following three events hold simultaneously
with $\mu^{\pi_T}$-probability at least $1 - \delta$.
\begin{enumerate}[(i)]
\item \label{itm:M'}
	There is a finite set $\M' \subset \M$ with
	$w(\M' \mid \ae_{<t}) > 1 - \delta$
	and $w(\nu \mid \ae_{<k}) \not\to 0$ as $k \to \infty$
	for all $\nu \in \M'$.
\item \label{itm:posterior-convergence}
	$\left| w(\M'' \mid \ae_{<t}) - w(\M'' \mid \ae_{<t_0}) \right|
	\leq \delta$ for all $\M'' \subseteq \M'$.
\item \label{itm:info-gain}
	$F^{\pi_T}_\infty(\ae_{<t}) < \delta \beta w_{\min}^2$.
\end{enumerate}
where
$w_{\min} := \inf \{ w(\nu \mid \ae_{<k}) \mid k \in \mathbb{N}, \nu \in \M' \}$,
which is positive by (\ref{itm:M'}).

(\ref{itm:M'}) and (\ref{itm:posterior-convergence}) are satisfied eventually
because the posterior $w(\,\cdot \mid \ae_{<t})$
converges $\mu^{\pi_T}$-almost surely.
Note that the set $\M'$ is random:
the limit of $w(\nu \mid \ae_{<t})$ as $t \to \infty$
depends on the history $\ae_{1:\infty}$.
Without loss of generality,
we assume the true environment $\mu$ is contained in $\M'$
since $w(\mu \mid \ae_{<t}) \not\to 0$ $\mu^{\pi_T}$-almost surely.
(\ref{itm:info-gain}) follows from \autoref{lem:info-gain}
since convergence in mean implies convergence in probability.

Moreover, we define the horizon $m := t + H_t(\eps_t)$
as the time step of the effective horizon at time step $t$.
Let $\ae_{<t}$ be a fixed history
for which (\ref{itm:M'}-\ref{itm:info-gain}) is satisfied.
Then we have
\begingroup
\allowdisplaybreaks
\begin{align*}
      \delta \beta w_{\min}^2
&>    F^{\pi_T}_\infty(\ae_{<t}) \\
&=    \sum_{\nu \in \M} w(\nu \mid \ae_{<t})
        D_\infty(\nu^{\pi_T}, \xi^{\pi_T} \mid \ae_{<t}) \\
&=    \mathbb{E}_{\nu \sim w(\,\cdot \mid \ae_{<t})} \left[
        D_\infty(\nu^{\pi_T}, \xi^{\pi_T} \mid \ae_{<t})
      \right] \\
&\geq \mathbb{E}_{\nu \sim w(\,\cdot \mid \ae_{<t})} \left[
        D_m(\nu^{\pi_T}, \xi^{\pi_T} \mid \ae_{<t})
      \right] \\
&\geq \beta w_{\min}^2 w( \M \setminus \M'' \mid \ae_{<t})
\end{align*}
\endgroup
by Markov's inequality where
\[
   \M''
:= \left\{ \nu \in \M \;\middle|\;
           D_m(\nu^{\pi_T}, \xi^{\pi_T} \mid \ae_{<t})
           < \beta w_{\min}^2
   \right\}.
\]
For our fixed history $\ae_{<t}$ we have
\begingroup
\allowdisplaybreaks
\begin{align*}
   1 - \delta
&< w( \M'' \mid \ae_{<t}) \\
&\stackrel{(\ref{itm:M'})}{\leq}
   w( \M'' \cap \M' \mid \ae_{<t}) + \delta \\
&\stackrel{(\ref{itm:posterior-convergence})}{\leq}
   w( \M'' \cap \M' \mid \ae_{<t_0}) + 2\delta \\
&\stackrel{(\ref{itm:M'})}{\leq}
   w( \M'' \mid \ae_{<t_0}) + 3\delta
\end{align*}
\endgroup
and thus we get
\begin{equation}\label{eq:D-bound}
  1 - 4\delta
< w \left[ D_m(\nu^{\pi_T}, \xi^{\pi_T} \mid \ae_{<t})
           < \beta w_{\min}^2
    \;\middle|\; \ae_{<t_0}
    \right].
\end{equation}
In particular, this bound holds for $\nu = \mu$
since $w(\mu \mid \ae_{<t_0}) > 4\delta$ by assumption.

It remains to show that with high probability the value $V^{\pi^*_\rho}_\mu$
of the sample $\rho$'s optimal policy $\pi^*_\rho$ is sufficiently close to
the $\mu$-optimal value $V^*_\mu$.
The worst case is that we draw the worst sample from $\M' \cap \M''$
twice in a row.
From now on, let $\rho$ denote the sample environment we draw at time step $t_0$,
and let $t$ denote some time step between
$t_0$ and $t_1 := t_0 + H_{t_0}(\eps_{t_0})$
(before the next resampling).
With probability $w(\nu' \mid \ae_{<t_0}) w(\nu' \mid \ae_{<t_1})$
we sample $\nu'$ both at $t_0$ and $t_1$
when following $\pi_T$.
Therefore we have for all $\ae_{t:m}$ and all $\nu \in \M$
\begin{align*}
&\phantom{=}~\,
      \nu^{\pi_T}(\ae_{1:m} \mid \ae_{<t}) \\
&\geq w(\nu' \mid \ae_{<t_0}) w(\nu' \mid \ae_{<t_1})
      \nu^{\pi^*_{\nu'}}(\ae_{1:m} \mid \ae_{<t}).
\end{align*}
Thus we get for all $\nu \in \M'$ (in particular $\rho$ and $\mu$)
\begingroup
\allowdisplaybreaks
\begin{align*}
&\phantom{=}~\,
      D_m(\mu^{\pi_T}, \rho^{\pi_T} \mid \ae_{<t}) \\
&\geq \sup_{\nu' \in \M} \sup_{A \subseteq (\A \times \E)^m}
      \Big| w(\nu' \mid \ae_{<t_0}) w(\nu' \mid \ae_{<t_1}) \\
&\qquad\qquad (\mu^{\pi^*_{\nu'}}(A \mid \ae_{<t}) - \rho^{\pi^*_{\nu'}}(A \mid \ae_{<t})) \Big| \\
&\geq w(\nu \mid \ae_{<t_0}) w(\nu \mid \ae_{<t_1}) \\
&\qquad\qquad \sup_{A \subseteq (\A \times \E)^m} \Big|
        \mu^{\pi^*_\nu}(A \mid \ae_{<t}) - \rho^{\pi^*_\nu}(A \mid \ae_{<t})
      \Big| \\
&\geq w_{\min}^2 D_m(\mu^{\pi^*_\nu}, \rho^{\pi^*_\nu} \mid \ae_{<t}).
\end{align*}
\endgroup
For $\rho \in \M''$ we get
\begin{align*}
&\phantom{=}~\,
      D_m(\mu^{\pi_T}, \rho^{\pi_T} \mid \ae_{<t}) \\
&\leq D_m(\mu^{\pi_T}, \xi^{\pi_T} \mid \ae_{<t}) + D_m(\rho^{\pi_T}, \xi^{\pi_T} \mid \ae_{<t}) \\
&\stackrel{\eqref{eq:D-bound}}{<}
      \beta w_{\min}^2 + \beta w_{\min}^2
 =    2\beta w_{\min}^2,
\end{align*}
which implies together with \autoref{lem:value-bound}
and the fact that rewards in $[0, 1]$
\begingroup
\allowdisplaybreaks
\begin{align*}
&\phantom{=}~\,
     \left| V^{\pi^*_\nu}_\mu(\ae_{<t}) - V^{\pi^*_\nu}_\rho(\ae_{<t}) \right| \\
&\leq \frac{\Gamma_{t+H_t(\eps_t)}}{\Gamma_t} +
      \left| V^{\pi^*_\nu,m}_\mu(\ae_{<t}) - V^{\pi^*_\nu,m}_\rho(\ae_{<t}) \right| \\
&\leq \eps_t + D_m(\mu^{\pi^*_\nu}, \rho^{\pi^*_\nu} \mid \ae_{<t}) \\
&\leq \eps_t + \tfrac{1}{w_{\min}^2} D_m(\mu^{\pi_T}, \rho^{\pi_T} \mid \ae_{<t}) \\
&< \beta + 2\beta
=     3\beta.
\end{align*}
\endgroup
Hence we get (omitting history arguments $\ae_{<t}$ for simplicity)
\begin{equation}\label{eq:V-beta-bound}
\begin{aligned}
      V^*_\mu
&=    V^{\pi^*_\mu}_\mu
 <    V^{\pi^*_\mu}_\rho + 3\beta
 \leq V^*_\rho + 3\beta \\
&=    V^{\pi^*_\rho}_\rho + 3\beta
 <    V^{\pi^*_\rho}_\mu + 3\beta + 3\beta
 =    V^{\pi^*_\rho}_\mu + 6\beta.
\end{aligned}
\end{equation}

With $\mu^{\pi_T}$-probability at least $1 - \delta$
(\ref{itm:M'}), (\ref{itm:posterior-convergence}), and (\ref{itm:info-gain})
are true,
with $\mu^{\pi_T}$-probability at least $1 - \delta$
our sample $\rho$ happens to be in $\M'$ by (\ref{itm:M'}), and
with $w(\,\cdot \mid \ae_{<t_0})$-probability at least $1 - 4\delta$
the sample is in $\M''$ by \eqref{eq:D-bound}.
All of these events are true simultaneously with probability at least
$1 - (\delta + \delta + 4\delta) = 1 - 6\delta$.
Hence the bound \eqref{eq:V-beta-bound} transfers for $\pi_T$ such that
with $\mu^{\pi_T}$-probability $\geq 1 - 6\delta$ we have
\[
V^*_\mu(\ae_{<t}) - V^{\pi_T}_\mu(\ae_{<t}) < 6\beta.
\]
Therefore
$\mu^{\pi_T}[ V^*_\mu(\ae_{<t}) - V^{\pi_T}_\mu(\ae_{<t}) \geq 6\beta ] < 6\delta$
and with $\delta \to 0$ we get that
$V_\mu^*(\ae_{<t}) - V_\mu^{\pi_T}(\ae_{<t}) \to 0$ as $t \to \infty$ in probability.
The value function is bounded, thus it also converges in mean
by the dominated convergence theorem.
\end{proof}

\subsection{Weak Asymptotic Optimality}
\label{ssec:wao}

It might appear that convergence in mean is more natural than
the convergence of Cesàro averages of weak asymptotic optimality.
However, both notions are not so fundamentally different
because they both allow an infinite number of bad mistakes
(actions that lead to $V^*_\mu - V^\pi_\mu$ being large).
Asymptotic optimality in mean allows bad mistakes
as long as their probability converges to zero;
weak asymptotic optimality allows bad mistakes
as long as the total time spent on bad mistakes grows sublinearly.

Lattimore and Hutter show that weak asymptotic optimality is possible
in a countable class of deterministic environments using an MDL-agent that
explores through bursts of random walks~\cite[Def.~10]{LH:2011opt}.
For classes of stochastic environments,
{\BayesExp} is weakly asymptotically optimal~\cite[Ch.~5]{Lattimore:2013}.
However, this requires the additional condition that
the effective horizon grows sublinearly, $H_t(\varepsilon_t) \in o(t)$,
while \autoref{thm:taxi-ao} does not require this condition.

Generally, weak asymptotic optimality and asymptotic optimality in mean
are incomparable
because the notions of convergence are incomparable for (bounded)
random variables.
First,
for deterministic sequences
(i.e.\ deterministic policies in deterministic environments),
convergence in mean is equivalent to (regular) convergence,
which implies convergence in Cesàro average, but not vice versa.
Second,
convergence in probability (and hence convergence in mean for bounded random variables)
does not imply almost sure convergence of Cesàro averages%
~\cite[Sec.~14.18]{Stoyanov:2013counterexamples}.
We leave open the question whether
the policy $\pi_T$ is weakly asymptotically optimal.

\subsection{Strong Asymptotic Optimality}
\label{ssec:sao}

Strong asymptotic optimality is known to be impossible for deterministic policies%
~\cite[Thm.~8.1]{LH:2011opt},
but whether it is possible for stochastic policies is an open question.
However, we show that Thompson sampling is not strongly asymptotically optimal.

\begin{example}[Thompson Sampling is not Strongly Asymptotically Optimal]
\label{ex:TS-not-sao}
Define $\A := \{ \alpha, \beta \}$, $\E := \{ 0, 1/2, 1 \}$, and
assume geometric discounting, $\gamma_t := \gamma^t$ for $\gamma \in (0, 1)$.
Consider the following class of environments
$\M := \{ \nu_\infty, \nu_1, \nu_2, \ldots \}$
(transitions are labeled with action, reward):
\begin{center}
\small
\begin{tabular}{cc}
\begin{tikzpicture}
\node[circle, draw, minimum height=2em] (s0) at (0, 0) {$s_0$};
\node[circle, draw, minimum height=2em] (s1) at (-1.5, -1) {$s_1$};
\node[circle, draw, minimum height=2em] (s2) at (0, -2) {$s_2$};

\draw[->] (s0) to[loop above] node[left] {$\beta, \frac{1}{2}$} (s0);
\draw[->] (s0) to[bend right] node[above] {$\alpha, 0$} (s1);
\draw[->] (s1) to[bend right] node[below] {$\beta, 0$} (s0);
\draw[->] (s1) to node[below left] {$\alpha, 0$} (s2);
\draw[->] (s2) to node[right] {$\ast, 0$} (s0);
\end{tikzpicture} & \hspace{-4mm}
\begin{tikzpicture}
\node[circle, draw, minimum height=2em] (s0) at (0, 0) {$s_0$};
\node[circle, draw, minimum height=2em] (s1) at (-1.5, -1) {$s_1$};
\node[circle, draw, minimum height=2em] (s2) at (0, -2) {$s_2$};
\node[circle, draw, minimum height=2em] (s3) at (2, 0) {$s_3$};
\node[circle, draw, minimum height=2em] (s4) at (2, -2) {$s_4$};

\draw[->] (s0) to[loop above] node[left] {$\beta, \frac{1}{2}$} (s0);
\draw[->] (s0) to[bend right] node[above left] {$t < k: \alpha, 0$} (s1);
\draw[->] (s1) to[bend right] node[below] {$\beta, 0$} (s0);
\draw[->] (s1) to node[below left] {$\alpha, 0$} (s2);
\draw[->] (s2) to node[right] {$\ast, 0$} (s0);
\draw[->] (s0) to[bend left] node[above] {$t \geq k: \alpha, 0$} (s3);
\draw[->] (s3) to node[left] {$\alpha, 0$} (s4);
\draw[->] (s3) to[bend left] node[below] {$\beta, 0$} (s0);
\draw[->] (s4) to[loop below] node[left] {$\alpha, 1$} (s4);
\draw[->] (s4) to node[below] {$\beta, 0$} (s2);
\end{tikzpicture} \\
$\nu_\infty$ & $\nu_k$
\end{tabular}
\end{center}
Environment $\nu_k$ works just like environment $\nu_\infty$
except that after time step $k$, the path to state $s_3$ gets unlocked and
the optimal policy is to take action $\alpha$ twice from state $s_0$.
The class $\M$ is a class of deterministic weakly communicating MDPs
(but as an MDP $\nu_k$ has more than 5 states).
The optimal policy in environment $\nu_\infty$ is to always take action $\beta$,
the optimal policy for environment $\nu_k$ is
to take action $\beta$ for $t < k$ and then
take action $\beta$ in state $s_1$ and action $\alpha$ otherwise.

Suppose the policy $\pi_T$ is acting in environment $\nu_\infty$.
Since it is asymptotically optimal in the class $\M$,
it has to take actions $\alpha\alpha$ from $s_0$ infinitely often:
for $t < k$ environment $\nu_k$ is indistinguishable from $\nu_\infty$,
so the posterior for $\nu_k$ is larger or equal to the prior.
Hence there is always a constant chance of sampling $\nu_k$
until taking actions $\alpha\alpha$,
at which point all environments $\nu_k$ for $k \leq t$ become falsified.

If the policy $\pi_T$ decides to explore and take the first action $\alpha$,
it will be in state $s_1$.
Let $\ae_{<t}$ denote the current history.
Then the $\nu_\infty$-optimal action is $\beta$ and
\[
  V^*_{\nu_\infty}(\ae_{<t})
= (1 - \gamma) \left(
    0 + \gamma\frac{1}{2} + \gamma^2 \frac{1}{2} + \ldots
  \right)
= \frac{\gamma}{2}.
\]
The next action taken by $\pi_T$ is $\alpha$ since
any optimal policy for any sampled environment
that takes action $\alpha$ once, takes that action again
(and we are following that policy for an $\eps_t$-effective horizon).
Hence
\[
     V^{\pi_T}_{\nu_\infty}(\ae_{<t})
\leq (1 - \gamma) \left(
       0 + 0 + \gamma^2\frac{1}{2} + \gamma^3 \frac{1}{2} + \ldots
     \right)
=    \frac{\gamma^2}{2}.
\]
Therefore $V^*_{\nu_\infty} - V^{\pi_T}_{\nu_\infty} \geq (\gamma - \gamma^2)/2 > 0$.
This happens infinitely often with probability one and
thus we cannot get almost sure convergence.
\end{example}

We expect that strong asymptotic optimality can be achieved
with Thompson sampling by resampling at every time step
(with strong assumptions on the discount function).

\section{Regret}
\label{sec:regret}

\subsection{Setup}
\label{ssec:regret-setup}

In general environments classes
worst-case regret is linear
because the agent can get caught in a trap and be unable to recover%
~\cite[Sec.~5.3.2]{Hutter:2005}.
To achieve sublinear regret we need to ensure that
the agent can recover from mistakes.
Formally, we make the following assumption.

\begin{definition}[Recoverability]
\label{def:recoverability}
An environment $\nu$ satisfies the \emph{recoverability assumption} iff
\[
\sup_\pi \left|
  \EE^{\pi^*_\nu}_\nu[ V^*_\nu(\ae_{<t}) ]
  - \EE^\pi_\nu[ V^*_\nu(\ae_{<t}) ]
\right|
\to 0 \text{ as $t \to \infty$}.
\]
\end{definition}
Recoverability compares following the worst policy $\pi$ for $t - 1$ time steps
and then switching to the optimal policy $\pi^*_\nu$ to
having followed $\pi^*_\nu$ from the beginning.
The recoverability assumption states that
switching to the optimal policy at any time step
enables the recovery of most of the value.

Note that \autoref{def:recoverability} demands that
it becomes less costly to recover from mistakes as time progresses.
This should be regarded as an effect of the discount function:
if the (effective) horizon grows,
recovery becomes easier because
the optimal policy has more time to perform a recovery.
Moreover, recoverability is on the optimal policy,
in contrast to the notion of ergodicity in MDPs
which demands returning to a starting state regardless of the policy.

\begin{remark}[Weakly Communicating POMDPs are Recoverable]
\label{rem:recoverable-POMDPs}
If the effective horizon is growing,
$H_t(\eps) \to \infty$ as $t \to \infty$,
then any weakly communicating finite state partially observable MDP
satisfies the recoverability assumption.
\end{remark}

\begin{definition}[Regret]
\label{def:regret}
The \emph{regret} of a policy $\pi$ in environment $\mu$ is
\[
   R_m(\pi, \mu)
:= \sup_{\pi'} \EE_\mu^{\pi'} \left[ \sum_{t=1}^m r_t \right]
   - \EE_\mu^\pi \left[ \sum_{t=1}^m r_t \right].
\]
\end{definition}

Note that regret is undiscounted and always nonnegative.
Moreover, the supremum is always attained by some policy
(not necessarily the ($V_\mu$-)optimal policy $\pi^*_\mu$
because that policy uses discounting),
since the space of possible different policies for the first $m$
actions is finite since we assumed
the set of actions $\A$ and the set of percepts $\E$ to be finite.

\begin{assumption}[Discount Function]
\label{ass:gamma}
Let the discount function $\gamma$ be such that
\begin{enumerate}[(a)]
\item \label{itm:gamma(a)}
	$\gamma_t > 0$ for all $t$,
\item \label{itm:gamma(b)}
	$\gamma_t$ is monotone decreasing in $t$, and
\item \label{itm:gamma(c)}
	$H_t(\eps) \in o(t)$ for all $\eps > 0$.
\end{enumerate}
\end{assumption}

This assumption demands that the discount function is somewhat well-behaved:
the function has no oscillations, does not become $0$, and
the horizon is not growing too fast.

\autoref{ass:gamma} is satisfied by geometric discounting:
$\gamma_t := \gamma^t > 0$~(\ref{itm:gamma(a)})
for some fixed constant $\gamma \in (0, 1)$ is monotone decreasing~(\ref{itm:gamma(b)}),
$\Gamma_t = \gamma^t / (1 - \gamma)$, and
$H_t(\eps) = \lceil \log_\gamma \eps \rceil \in o(t)$~(\ref{itm:gamma(c)}).

The problem with geometric discounting is that
it makes the recoverability assumption very strong:
since the horizon is not growing, the environment has to enable
\emph{faster recovery} as time progresses;
in this case weakly communicating partially observable MDPs
are \emph{not} recoverable.

A choice with $H_t(\eps) \to \infty$ that satisfies \autoref{ass:gamma} is
$\gamma_t := e^{-\sqrt{t}} / \sqrt{t}$~\cite[Sec.~2.3.1]{Lattimore:2013}.
For this discount function
$\Gamma_t \approx 2e^{-\sqrt{t}}$,
$H_t(\eps) \approx -\sqrt{t}\log \eps + (\log \eps)^2 \in o(t)$, and thus
$H_t(\eps) \to \infty$ as $t \to \infty$.

\subsection{Sublinear Regret}
\label{ssec:sublinear-regret}

This subsection is dedicated to the following theorem.

\begin{theorem}[Sublinear Regret]
\label{thm:aoim-implies-sublinear-regret}
If the discount function $\gamma$ satisfies \autoref{ass:gamma},
the environment $\mu \in \M$ satisfies the recoverability assumption, and
$\pi$ is asymptotically optimal in mean, i.e.,
\[
\EE_\mu^\pi \big[ V^*_\mu(\ae_{<t}) - V^\pi_\mu(\ae_{<t}) \big] \to 0
\text{ as $t \to \infty$,}
\]
then $R_m(\pi, \mu) \in o(m)$.
\end{theorem}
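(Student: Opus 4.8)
The plan is to turn the undiscounted regret into a telescoping sum of discounted value differences and estimate it by summation by parts, using asymptotic optimality in mean together with recoverability to control the summands and \autoref{ass:gamma} to control the boundary terms.

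Fix $\eps > 0$. For each $m$ let $\tilde\pi$ be a policy attaining the supremum $\sup_{\pi'}\EE_\mu^{\pi'}[\sum_{t=1}^m r_t]$ (which exists, as noted after \autoref{def:regret}), extended arbitrarily to a full policy. Set
\[
D_t := \EE_\mu^{\tilde\pi}\Big[\sum_{k=t}^\infty \gamma_k r_k\Big] - \EE_\mu^\pi\Big[\sum_{k=t}^\infty \gamma_k r_k\Big]
= \Gamma_t\Big(\EE_\mu^{\tilde\pi}[V^{\tilde\pi}_\mu(\ae_{<t})] - \EE_\mu^\pi[V^\pi_\mu(\ae_{<t})]\Big),
\]
the second equality by the tower rule and \autoref{def:value-function} (here $\gamma_t>0$ is used). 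Since $D_t - D_{t+1} = \gamma_t\big(\EE_\mu^{\tilde\pi}[r_t] - \EE_\mu^\pi[r_t]\big)$, dividing by $\gamma_t$ and summing gives $R_m(\pi,\mu) = \sum_{t=1}^m (D_t - D_{t+1})/\gamma_t$. Because rewards lie in $[0,1]$ we have $|D_t| \le \Gamma_t$ for all $t$.

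The key step is the bound $D_t \le 3\eps\Gamma_t$ for all $t$ past a threshold $t_0 = t_0(\eps)$ that does not depend on $m$. By asymptotic optimality in mean, $\EE_\mu^\pi[V^*_\mu(\ae_{<t})] - \EE_\mu^\pi[V^\pi_\mu(\ae_{<t})] < \eps$ for $t$ large, and by \autoref{def:recoverability} we have $|\EE_\mu^{\pi^*_\mu}[V^*_\mu(\ae_{<t})] - \EE_\mu^\sigma[V^*_\mu(\ae_{<t})]| < \eps$ for \emph{every} policy $\sigma$ and $t$ large; take $t_0$ past both thresholds. For $t \ge t_0$, using $V^{\tilde\pi}_\mu \le V^*_\mu$ pointwise and recoverability with $\sigma = \tilde\pi$ gives $\EE_\mu^{\tilde\pi}[V^{\tilde\pi}_\mu(\ae_{<t})] < \EE_\mu^{\pi^*_\mu}[V^*_\mu(\ae_{<t})] + \eps$, while asymptotic optimality together with recoverability for $\sigma=\pi$ gives $\EE_\mu^\pi[V^\pi_\mu(\ae_{<t})] > \EE_\mu^{\pi^*_\mu}[V^*_\mu(\ae_{<t})] - 2\eps$; subtracting and multiplying by $\Gamma_t$ yields $D_t < 3\eps\Gamma_t$. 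This is the one place recoverability is indispensable: it is what makes the horizon-$m$-optimal $\tilde\pi$ and the learner $\pi$ comparable, via the common reference $\pi^*_\mu$.

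It remains to estimate $R_m(\pi,\mu)=\sum_{t=1}^m(D_t-D_{t+1})/\gamma_t$. Writing $c_t := 1/\gamma_t$ and summing by parts,
\[
R_m(\pi,\mu) = \frac{D_1}{\gamma_1} - \frac{D_{m+1}}{\gamma_m} + \sum_{t=2}^m D_t(c_t - c_{t-1}),
\]
with $c_t - c_{t-1}\ge 0$ since $\gamma$ is monotone decreasing. The first term is a constant; $-D_{m+1}/\gamma_m \le \Gamma_{m+1}/\gamma_m \le \Gamma_m/\gamma_m$; and splitting the last sum at $t_0$ bounds $\sum_{t=2}^{t_0}D_t(c_t-c_{t-1}) \le \sum_{t=2}^{t_0}\Gamma_t(c_t-c_{t-1})$ (a constant depending only on $\eps$) and $\sum_{t=t_0+1}^m D_t(c_t-c_{t-1}) \le 3\eps\sum_{t=t_0+1}^m\Gamma_t(c_t-c_{t-1})$. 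A further summation by parts, using $\Gamma_t-\Gamma_{t+1}=\gamma_t$ and the telescoping $\sum_t \gamma_t c_t = \sum_t 1$, gives $\sum_{t=t_0+1}^m\Gamma_t(c_t-c_{t-1}) = \Gamma_m/\gamma_m - \Gamma_{t_0+1}/\gamma_{t_0} + (m-t_0-1) \le m + \Gamma_m/\gamma_m$. Finally $\Gamma_m/\gamma_m = o(m)$: monotonicity of $\gamma$ gives $\Gamma_t - \Gamma_{t+k} = \sum_{j=t}^{t+k-1}\gamma_j \le k\gamma_t$, so $H_t(1/2) \ge \tfrac12\,\Gamma_t/\gamma_t$ and hence $\Gamma_m/\gamma_m \le 2H_m(1/2) \in o(m)$ by \autoref{ass:gamma}. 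Collecting the bounds, $R_m(\pi,\mu) \le C(\eps) + 3\eps\,m + o(m)$ for a constant $C(\eps)$ independent of $m$, so $\limsup_{m\to\infty} R_m(\pi,\mu)/m \le 3\eps$; as $\eps$ was arbitrary and $R_m \ge 0$, this yields $R_m(\pi,\mu) \in o(m)$.

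The main obstacle is the estimate $D_t \le 3\eps\Gamma_t$ and the observation that recoverability, rather than asymptotic optimality alone, is what bridges $\tilde\pi$ and $\pi$; everything afterwards is bookkeeping around two summations by parts, with \autoref{ass:gamma} invoked only to keep $\Gamma_m/\gamma_m$ sublinear — and, implicitly, to exclude discount functions for which even $\pi^*_\mu$ would have linear regret.
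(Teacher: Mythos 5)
Your proposal is correct and follows essentially the same route as the paper: your bound $D_t \le 3\eps\Gamma_t$ is exactly the paper's combination of recoverability (uniform over policies, hence over the horizon-$m$ comparators) with asymptotic optimality in mean leading to \eqref{eq:convergence-of-values}, and your double summation by parts with $c_t = 1/\gamma_t$ reproduces \autoref{lem:value-and-regret}, since the paper's weights satisfy $b_t/\Gamma_t = c_t - c_{t-1}$ and your boundary term $\Gamma_m/\gamma_m \le 2H_m(1/2)$ is the paper's bound \eqref{eq:t-independent-bound}. No gaps; the only difference is that you unpack the technical lemma inline rather than isolating it.
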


If the items in \autoref{ass:gamma} are violated,
\autoref{thm:aoim-implies-sublinear-regret} can fail:
\begin{itemize}
\item If $\gamma_t = 0$ for some time steps $t$,
our policy does not care about those time steps and might take actions
that have large regret.
\item Similarly if $\gamma$ oscillates between high values and very low values:
our policy might take high-regret actions in
time steps with comparatively lower $\gamma$-weight.
\item If the horizon grows linearly,
infinitely often
our policy might spend some constant fraction of the current effective horizon exploring,
which incurs a cost that is a constant fraction of the total regret so far.
\end{itemize}

To prove \autoref{thm:aoim-implies-sublinear-regret},
we apply the following technical lemma.

\begin{lemma}[Value and Regret]
\label{lem:value-and-regret}
Let $\eps > 0$ and
assume the discount function $\gamma$ satisfies \autoref{ass:gamma}.
Let $(d_t)_{t \in \mathbb{N}}$ be a sequence of numbers with
$|d_t| \leq 1$ for all $t$.
If there is a time step $t_0$ with
\begin{equation}\label{eq:ao}
\frac{1}{\Gamma_t} \sum_{k=t}^\infty \gamma_k d_k < \eps
\quad \forall t \geq t_0
\end{equation}
then
\[
     \sum_{t=1}^m d_t
\leq t_0 + \varepsilon(m - t_0 + 1)
     + \frac{1 + \varepsilon}{1 - \varepsilon} H_m(\varepsilon)
\]
\end{lemma}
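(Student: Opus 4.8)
The plan is to turn the hypothesis \eqref{eq:ao}, which is a statement about discounted tail sums, into a bound on the undiscounted partial sum $\sum_{t=1}^m d_t$. First I would split the sum at $t_0$: the first $t_0 - 1$ terms contribute at most $t_0 - 1$ (since $|d_t| \le 1$), so the work is to bound $\sum_{t=t_0}^m d_t$. For this range I would like to use \eqref{eq:ao} telescopically, but the obstacle is that \eqref{eq:ao} controls infinite discounted tails, not finite undiscounted blocks. The standard trick is an Abel summation / change-of-weights argument: write $\gamma_k d_k$ in terms of the quantities $S_t := \sum_{k=t}^\infty \gamma_k d_k$, noting $\gamma_t d_t = S_t - S_{t+1}$, and then recover $d_t = (S_t - S_{t+1})/\gamma_t$.

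The key step I expect to carry the proof is a summation-by-parts on $\sum_{t=t_0}^m d_t = \sum_{t=t_0}^m (S_t - S_{t+1})/\gamma_t$. Rearranging, this equals $S_{t_0}/\gamma_{t_0} - S_{m+1}/\gamma_m + \sum_{t=t_0+1}^m S_t (1/\gamma_t - 1/\gamma_{t-1})$. By hypothesis $S_t < \eps \Gamma_t$ for $t \ge t_0$, and trivially $S_t \ge -\Gamma_t$ always; using monotonicity of $\gamma$ from \assref{itm:gamma(b)}, the coefficients $1/\gamma_t - 1/\gamma_{t-1}$ are nonnegative, so each term $S_t(1/\gamma_t - 1/\gamma_{t-1})$ is at most $\eps \Gamma_t (1/\gamma_t - 1/\gamma_{t-1})$, and the leftover negative contributions ($-S_{m+1}/\gamma_m$ and the lower-bound side) are at most $\Gamma_{m+1}/\gamma_m$-type terms. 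Bounding $\Gamma_t$ against $\gamma_t$ uses the effective-horizon definition \eqref{eq:effective-horizon}: for any $t$, $\Gamma_{t + H_t(\eps)} \le \eps \Gamma_t$, and more crudely $\Gamma_t \le \Gamma_{t - H} \le \dots$; the clean way is $\Gamma_{m+1} \le \Gamma_m$ and $\gamma_m \Gamma_m / \Gamma_{m+1} \ge (1-\eps)$-type estimates, or directly $\Gamma_{t}/\gamma_{t}$ is controlled because $\gamma$ is decreasing so $\Gamma_t \le (t' - t)\gamma_t$ over a horizon window.

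The arithmetic that produces exactly $t_0 + \eps(m - t_0 + 1) + \frac{1+\eps}{1-\eps} H_m(\eps)$ is where the effective horizon enters quantitatively: the $\eps(m-t_0+1)$ piece comes from the $\eps \Gamma_t$ bound summing telescopically (after the weight change, $\sum_t \Gamma_t(1/\gamma_t - 1/\gamma_{t-1})$ collapses to roughly $m - t_0$ because $\Gamma_t/\gamma_t$ behaves like a horizon length but the telescoping of $\gamma_k$ inside $\Gamma_t$ against the difference weights yields a count of time steps), and the $\frac{1+\eps}{1-\eps}H_m(\eps)$ piece is the boundary term at $m$: $S_{m+1}/\gamma_m$ plus a correction, bounded via $\Gamma_{m+1} \le$ (something like) $\frac{1}{1-\eps}\gamma_m H_m(\eps)$, which follows by writing $\Gamma_{m+1} = \sum_{k \ge m+1}\gamma_k$, splitting this tail into blocks of length $H_m(\eps)$ each of which shrinks by a factor $\eps$, giving a geometric series with ratio $\eps$ and first block bounded by $H_m(\eps)\gamma_{m+1} \le H_m(\eps)\gamma_m$.

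The main obstacle, then, is not any single inequality but getting the bookkeeping of the summation-by-parts exactly right so the constants match: in particular handling the sign of $S_t$ correctly (it is bounded above by $\eps\Gamma_t$ but can be as negative as $-\Gamma_t$, and only the upper bound is needed because we want an upper bound on $\sum d_t$, so the $-S_{m+1}/\gamma_m$ term must be bounded using $S_{m+1} \ge -\Gamma_{m+1}$), and correctly converting the block-decomposition of $\Gamma_{m+1}$ into the factor $\frac{1+\eps}{1-\eps}$. I would first do the computation allowing an unspecified constant $C$ in place of $\frac{1+\eps}{1-\eps}$ to make sure the structure is right, then tighten. The monotonicity assumption \assref{itm:gamma(b)} and positivity \assref{itm:gamma(a)} are used essentially (to sign the difference weights and to divide by $\gamma_t$); \assref{itm:gamma(c)} is not needed for this lemma itself but will be used later in the proof of \autoref{thm:aoim-implies-sublinear-regret} to conclude $o(m)$ since $H_m(\eps) \in o(m)$.
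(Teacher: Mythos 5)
Your summation-by-parts on $d_t = (S_t - S_{t+1})/\gamma_t$ is exactly the paper's change-of-weights argument (the paper's coefficients $b_t/\Gamma_t$ are precisely your $1/\gamma_{t_0}$ and $1/\gamma_t - 1/\gamma_{t-1}$), and the rest of your plan — using $S_t < \eps\Gamma_t$ for $t \geq t_0$, $S_{m+1} \geq -\Gamma_{m+1}$, and $\Gamma_m/\gamma_m \leq H_m(\eps)/(1-\eps)$ — matches the paper's proof and yields the stated bound. One caveat: justify the last estimate by your ``direct'' route, $\Gamma_m \leq H_m(\eps)\gamma_m + \Gamma_{m+H_m(\eps)} \leq H_m(\eps)\gamma_m + \eps\Gamma_m$ (monotonicity over a single horizon window), rather than the geometric-block decomposition of the tail, since the effective horizon is time-dependent and the claim that successive blocks of length $H_m(\eps)$ shrink by a factor $\eps$ does not follow from the definition at time $m$ alone.
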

\begin{proof}
This proof essentially follows
the proof of \cite[Thm.~17]{Hutter:2006discounting};
see \autoref{app:omitted-proofs}.
\end{proof}

\begin{proof}[Proof of \autoref{thm:aoim-implies-sublinear-regret}]
Let $(\pi_m)_{m \in \mathbb{N}}$ denote any sequence of policies,
such as a sequence of policies that
attain the supremum in the definition of regret.
We want to show that
\[
    \EE_\mu^{\pi_m} \left[ \sum_{t=1}^m r_t \right]
    - \EE_\mu^\pi \left[ \sum_{t=1}^m r_t \right]
\in o(m).
\]
For
\begin{equation}\label{eq:def-d_k}
d_k^{(m)} := \EE_\mu^{\pi_m} [ r_k ] - \EE_\mu^\pi [ r_k ]
\end{equation}
we have $-1 \leq d_k^{(m)} \leq 1$
since we assumed rewards to be bounded between $0$ and $1$.
Because the environment $\mu$ satisfies the recoverability assumption
we have
\begin{align*}
&\left|
    \EE^{\pi^*_\mu}_\mu[ V^*_\mu(\ae_{<t}) ] - \EE^\pi_\mu[ V^*_\mu(\ae_{<t}) ]
 \right|
\to 0
\text{ as $t \to \infty$}, \text{ and} \\
&\sup_m \left| \EE^{\pi^*_\mu}_\mu[ V^*_\mu(\ae_{<t}) ]
               - \EE^{\pi_m}_\mu[ V^*_\mu(\ae_{<t}) ]
        \right|
\to 0
\text{ as $t \to \infty$},
\end{align*}
so we conclude that
\[
    \sup_m \left|
      \EE^\pi_\mu[ V^*_\mu(\ae_{<t}) ] - \EE^{\pi_m}_\mu[ V^*_\mu(\ae_{<t}) ]
    \right|
\to 0
\]
by the triangle inequality and thus
\begin{equation}\label{eq:recover}
    \sup_m \EE^{\pi_m}_\mu[ V^*_\mu(\ae_{<t}) ]
    - \EE^\pi_\mu[ V^*_\mu(\ae_{<t}) ]
\to 0
\text{ as $t \to \infty$}.
\end{equation}
By assumption the policy $\pi$ is asymptotically optimal in mean,
so we have
\[
    \EE^\pi_\mu[ V^*_\mu(\ae_{<t}) ] - \EE^\pi_\mu[ V^\pi_\mu(\ae_{<t}) ]
\to 0
\text{ as $t \to \infty$},
\]
and with \eqref{eq:recover} this combines to
\[
    \sup_m \EE^{\pi_m}_\mu[ V^*_\mu(\ae_{<t}) ]
    - \EE^\pi_\mu[ V^\pi_\mu(\ae_{<t}) ]
\to 0 \text{ as $t \to \infty$}.
\]
From $V^*_\mu(\ae_{<t}) \geq V^{\pi_m}_\mu(\ae_{<t})$ we get
\begin{equation}\label{eq:convergence-of-values}
\limsup_{t \to \infty} \left(
    \sup_m \EE^{\pi_m}_\mu[ V^{\pi_m}_\mu(\ae_{<t}) ]
    - \EE^\pi_\mu[ V^\pi_\mu(\ae_{<t}) ]
    \right)
\leq 0.
\end{equation}
For $\pi' \in \{ \pi, \pi_1, \pi_2, \ldots \}$ we have
\begin{align*}
   \EE^{\pi'}_\mu[ V^{\pi'}_\mu(\ae_{<t}) ]
&= \EE^{\pi'}_\mu \left[ \frac{1}{\Gamma_t}
     \EE^{\pi'}_\mu \left[ \sum_{k=t}^\infty \gamma_k r_k \,\middle|\, \ae_{<t} \right]
   \right] \\
&= \EE^{\pi'}_\mu \left[ \frac{1}{\Gamma_t} \sum_{k=t}^\infty \gamma_k r_k \right] \\
&= \frac{1}{\Gamma_t} \sum_{k=t}^\infty \gamma_k \EE^{\pi'}_\mu [ r_k ],
\end{align*}
so from \eqref{eq:def-d_k} and \eqref{eq:convergence-of-values} we get
\[
\limsup_{t \to \infty} \sup_m
\frac{1}{\Gamma_t} \sum_{k=t}^\infty \gamma_k d_k^{(m)} \leq 0.
\]
Let $\eps > 0$ and choose $t_0$ independent of $m$ and large enough such that
$\sup_m \sum_{k=t}^\infty \gamma_k d_k^{(m)} / \Gamma_t < \eps$
for all $t \geq t_0$.
Now we let $m \in \mathbb{N}$ be given and
apply \autoref{lem:value-and-regret} to get
\begin{align*}
      \frac{R_m(\pi, \mu)}{m}
&=    \frac{\sum_{k=1}^m d_k^{(m)}}{m} \\
&\leq \frac{t_0 + \varepsilon(m - t_0 + 1) + \frac{1 + \varepsilon}{1 - \varepsilon} H_m(\varepsilon)}{m}.
\end{align*}
Since $H_t(\eps) \in o(t)$ according to \assref{itm:gamma(c)}
we get $\limsup_{m \to \infty} R_m(\pi, \mu) / m \leq 0$.
\end{proof}

\begin{example}[Converse of \autoref{thm:aoim-implies-sublinear-regret} is False]
\label{ex:sublinear-regret-does-not-imply-ao}
Let $\mu$ be a two-armed Bernoulli bandit with means $0$ and $1$
and suppose we are using geometric discounting
with discount factor $\gamma \in [0, 1)$.
This environment is recoverable.
If our policy $\pi$ pulls the suboptimal arm
exactly on time steps $1, 2, 4, 8, 16, \ldots$,
regret will be logarithmic.
However, on time steps $t = 2^n$ for $n \in \mathbb{N}$
the value difference $V^*_\mu - V^\pi_\mu$
is deterministically at least $1 - \gamma > 0$.
\end{example}

\subsection{Implications}
\label{ssec:regret-implications}

We get the following immediate consequence.

\begin{corollary}[Sublinear Regret for the Optimal Discounted Policy]
\label{cor:discounted-and-undiscounted-regret}
If the discount function $\gamma$ satisfies \autoref{ass:gamma} and
the environment $\mu$ satisfies the recoverability assumption,
then
$
    R_m(\pi^*_\mu, \mu)
\in o(m)
$.
\end{corollary}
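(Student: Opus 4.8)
The plan is to obtain the corollary as an immediate instance of \autoref{thm:aoim-implies-sublinear-regret}, applied with $\pi := \pi^*_\mu$. Two of the three hypotheses of that theorem are handed to us verbatim in the corollary's statement: the discount function $\gamma$ satisfies \autoref{ass:gamma}, and the environment $\mu$ satisfies the recoverability assumption of \autoref{def:recoverability}. So the only thing that needs checking is that $\pi^*_\mu$ is asymptotically optimal in mean, i.e.\ that $\EE_\mu^{\pi^*_\mu}\big[V^*_\mu(\ae_{<t}) - V^{\pi^*_\mu}_\mu(\ae_{<t})\big] \to 0$ as $t \to \infty$.

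First I would observe that this last condition holds trivially. By the defining property of $\pi^*_\mu$ — it takes an optimal action at every history, and such a policy exists and attains the supremum in \autoref{def:value-function} (this is exactly the cited existence result \cite[Thm.~10]{LH:2014discounting}, valid under our standing assumptions of bounded rewards and summable $\gamma$) — we have $V^{\pi^*_\mu}_\mu(\ae_{<t}) = V^*_\mu(\ae_{<t})$ for \emph{every} history $\ae_{<t}$, not merely asymptotically. Hence the nonnegative random variable $V^*_\mu(\ae_{<t}) - V^{\pi^*_\mu}_\mu(\ae_{<t})$ is identically $0$, so its $\mu^{\pi^*_\mu}$-expectation is $0$ for all $t$ and in particular tends to $0$. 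Note there is no tension with the earlier observation that asymptotic optimality is impossible for deterministic policies: that statement concerns optimality uniformly over the whole class $\M$, whereas here we only need in-mean optimality within the single environment $\mu$, which $\pi^*_\mu$ satisfies by construction.

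Having verified all three hypotheses, I would simply invoke \autoref{thm:aoim-implies-sublinear-regret} with $\pi = \pi^*_\mu$ to conclude $R_m(\pi^*_\mu, \mu) \in o(m)$, which is the claim.

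As for the main obstacle: there really is none of substance — the corollary is a one-line consequence, since recoverability, the $H_t(\eps) \in o(t)$ bookkeeping, and the passage from discounted value to undiscounted regret are all already packaged inside \autoref{thm:aoim-implies-sublinear-regret} (via \autoref{lem:value-and-regret}). The only point deserving a sentence of care is making sure $\pi^*_\mu$ genuinely \emph{attains} $V^*_\mu$ rather than only approximating it, so that the value gap is exactly zero; this is precisely the content of the optimal-policy existence result cited above, so no additional argument is required.
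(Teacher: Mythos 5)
Your proposal is correct and matches the paper's own argument: the corollary is obtained by applying \autoref{thm:aoim-implies-sublinear-regret} with $\pi = \pi^*_\mu$, noting that $\pi^*_\mu$ is trivially asymptotically optimal in $\mu$ since $V^{\pi^*_\mu}_\mu = V^*_\mu$ identically. Your extra remark that this does not conflict with the impossibility of asymptotic optimality for deterministic policies (which concerns the whole class $\M$) is a fair clarification but not needed.
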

\begin{proof}
From \autoref{thm:aoim-implies-sublinear-regret}
since the policy $\pi^*_\mu$ is (trivially) asymptotically optimal in $\mu$.
\end{proof}

If the environment does not satisfy the recoverability assumption,
regret may be linear \emph{even on the optimal policy}:
the optimal policy maximizes discounted rewards and
this short-sightedness might incur a tradeoff
that leads to linear regret later on
if the environment does not allow recovery.

\begin{corollary}[Sublinear Regret for Thompson Sampling]
\label{cor:thompson-sampling-regret}
If the discount function $\gamma$ satisfies \autoref{ass:gamma} and
the environment $\mu \in \M$ satisfies the recoverability assumption,
then $R_m(\pi_T, \mu) \in o(m)$
for the Thompson sampling policy $\pi_T$.
\end{corollary}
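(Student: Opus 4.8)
The plan is to simply chain the two main results of the paper: \autoref{thm:taxi-ao} supplies the hypothesis that \autoref{thm:aoim-implies-sublinear-regret} needs. Concretely, I would first invoke \autoref{thm:taxi-ao}, which states that for every $\mu \in \M$ the Thompson sampling policy $\pi_T$ satisfies $\EE_\mu^{\pi_T}[ V^*_\mu(\ae_{<t}) - V^{\pi_T}_\mu(\ae_{<t}) ] \to 0$ as $t \to \infty$; that is, $\pi_T$ is asymptotically optimal in mean in $\mu$. This holds unconditionally on the discount function (beyond the standing assumptions of \autoref{sec:preliminaries}) and in particular for whichever $\mu$ we have fixed here.

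Next I would check that the remaining hypotheses of \autoref{thm:aoim-implies-sublinear-regret} are exactly the hypotheses we are given: $\gamma$ satisfies \autoref{ass:gamma}, and $\mu \in \M$ satisfies the recoverability assumption (\autoref{def:recoverability}). Since $\pi = \pi_T$ is asymptotically optimal in mean in $\mu$ by the previous paragraph, all three conditions of \autoref{thm:aoim-implies-sublinear-regret} are met, and we conclude $R_m(\pi_T, \mu) \in o(m)$, as claimed.

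There is essentially no obstacle: the only thing to verify is that the quantifiers line up, namely that \autoref{thm:taxi-ao} gives asymptotic optimality in mean for \emph{every} environment in $\M$ and not merely the Bayes mixture or some subclass, so that it applies to the specific recoverable $\mu$ appearing in the corollary's hypothesis. Everything else is a direct appeal to the already-proved theorems, so the proof is a two-line deduction. (One could also phrase it as a specialization of \autoref{cor:discounted-and-undiscounted-regret}'s argument, but applying \autoref{thm:aoim-implies-sublinear-regret} with $\pi = \pi_T$ is the cleanest route.)
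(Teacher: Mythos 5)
Your proposal is correct and coincides with the paper's own proof, which likewise obtains the corollary by combining \autoref{thm:taxi-ao} (asymptotic optimality in mean of $\pi_T$) with \autoref{thm:aoim-implies-sublinear-regret} under the stated assumptions on $\gamma$ and $\mu$. Nothing further is needed.
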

\begin{proof}
From \autoref{thm:taxi-ao} and \autoref{thm:aoim-implies-sublinear-regret}.
\end{proof}

\section{Discussion}
\label{sec:discussion}

In this paper we introduced a reinforcement learning policy $\pi_T$ based on Thompson sampling
for general countable environment classes~(\autoref{alg:Thompson-sampling}).
We proved two asymptotic statements about this policy.
\autoref{thm:taxi-ao} states that $\pi_T$ is asymptotically optimal in mean:
the value of $\pi_T$ in the true environment
converges to the optimal value.
\autoref{cor:thompson-sampling-regret} states that
the regret of $\pi_T$ is sublinear:
the difference of the expected average rewards between $\pi_T$
and the best informed policy converges to $0$.
Both statements come without a concrete convergence rate because of the
weak assumptions we made on the environment class.

Asymptotic optimality has to be taken with a grain of salt.
It provides no incentive to the agent to avoid traps in the environment.
Once the agent gets caught in a trap, all actions are equally bad and thus
optimal: asymptotic optimality has been achieved.
Even worse, an asymptotically optimal agent has to explore all the traps
because they might contain hidden treasure.
Overall, there is a dichotomy between the asymptotic nature of
asymptotic optimality and the use of discounting to prioritize the present over the future.
Ideally, we would want to give finite guarantees instead,
but without additional assumptions this is likely impossible in this general setting.
Our regret bound could be a step in the right direction,
even though itself asymptotic in nature.

For Bayesians asymptotic optimality means that
the posterior distribution $w(\,\cdot \mid \ae_{<t})$
concentrates on environments that are indistinguishable from the true environment
(but generally not on the true environment).
This is why Thompson sampling works:
any optimal policy of the environment we draw from the posterior will,
with higher and higher probability,
also be (almost) optimal in the true environment.

If the Bayesian mixture $\xi$ is inside the class $\M$
(as it is the case for the class of
lower semicomputable chronological semimeasures~\cite{Hutter:2005}),
then we can assign $\xi$ a prior probability
that is arbitrarily close to $1$.
Since the posterior of $\xi$ is the same as the prior,
Thompson sampling will act according to the Bayes-optimal policy
most of the time.
This means the Bayes-value of Thompson sampling can be very good;
formally,
$V^*_\xi(\epsilon) - V^{\pi_T}_\xi(\epsilon)$ can be made arbitrarily small,
and thus Thompson sampling can have
near-optimal Legg-Hutter intelligence~\cite{LH:2007int}.

In contrast, the Bayes-value of Thompson sampling can also be very bad:
Suppose you have a class of $(n+1)$-armed bandits indexed $1, \ldots, n$
where bandit $i$ gives reward $1 - \eps$ on arm $1$, reward $1$ on arm $i + 1$,
and reward $0$ on all other arms.
For geometric discounting
and $\eps < (1 - \gamma)/(2 - \gamma)$,
it is Bayes-optimal to pull arm $1$ while Thompson sampling will explore on average
$n/2$ arms until it finds the optimal arm.
The Bayes-value of Thompson sampling is $1/(n-\gamma_{n-1})$
in contract to $(1 - \eps)$ achieved by Bayes.
For a horizon of $n$,
the Bayes-optimal policy suffers a regret of $\eps n$ and
Thompson sampling a regret of $n/2$,
which is much larger for small $\eps$.

The exploration performed by Thompson sampling is qualitatively different from
the exploration by {\BayesExp}~\cite[Ch.~5]{Lattimore:2013}.
{\BayesExp} performs phases of exploration in which
it maximizes the expected information gain.
This explores the environment class completely,
even achieving off-policy prediction~\cite[Thm.~7]{OLH:2013ksa}.
In contrast, Thompson sampling only explores on the optimal policies,
and in some environment classes this will not yield off-policy prediction.
So in this sense the exploration mechanism of Thompson sampling is more reward-oriented
than maximizing information gain.

Possible avenues of future research are
providing concrete convergence rates for specific environment classes
and results for uncountable (parameterized) environment classes.
For the latter, we have to use different analysis techniques because
the true environment $\mu$ is typically assigned a prior probability of $0$
(only a positive density) but
the proofs of \autoref{lem:info-gain} and \autoref{thm:taxi-ao}
rely on dividing by or taking a minimum over prior probabilities.
We also left open
whether Thompson sampling is weakly asymptotically optimal.

\subsubsection*{Acknowledgements}
\autoref{ex:TS-not-sao} was developed jointly with Stuart Armstrong.
We thank Tom Everitt and Djallel Bouneffouf for proofreading.


\bibliographystyle{alpha}
\bibliography{ai}


\cleardoublepage
\appendix

\section{List of Notation}
\label{app:notation}

\begin{xtabular*}{\linewidth}{lp{60mm}}
$:=$
	& defined to be equal \\
$\mathbb{N}$
	& the natural numbers, starting with $0$ \\
$\Delta\mathcal{Y}$
	& the set of all probability distributions on $\mathcal{Y}$ \\
$\X^*$
	& the set of all finite strings over the alphabet $\X$ \\
$\X^\infty$
	& the set of all infinite strings over the alphabet $\X$ \\
$\A$
	& the (finite) set of possible actions \\
$\E$
	& the (finite) set of possible percepts \\
$\alpha, \beta$
	& two different actions, $\alpha, \beta \in \A$ \\
$a_t$
	& the action in time step $t$ \\
$e_t$
	& the percept in time step $t$ \\
$r_t$
	& the reward in time step $t$, bounded between $0$ and $1$ \\
$\ae_{<t}$
	& the history up to time $t-1$, i.e., the first $t - 1$ interactions,
	$a_1 e_1 a_2 e_2 \ldots a_{t-1} e_{t-1}$ \\
$\epsilon$
	& the history of length $0$ \\
$\eps$
	& a small positive real number \\
$\gamma$
	& the discount function $\gamma: \mathbb{N} \to \mathbb{R}_{\geq 0}$ \\
$\Gamma_t$
	& a discount normalization factor,
	$\Gamma_t := \sum_{i=t}^\infty \gamma_i$ \\
$H_t(\eps)$
	& the $\eps$-effective horizon, defined in \eqref{eq:effective-horizon} \\
$\pi$
	& a (stochastic) policy, i.e., a function $\pi: \H \to \Delta\A$ \\
$\pi^*_\nu$
	& an optimal policy for environment $\nu$ \\
$V^\pi_\nu$
	& value of the policy $\pi$ in environment $\nu$ \\
$n, k, i$
	& natural numbers \\
$t$
	& (current) time step \\
$m$
	& time step at the end of an effective horizon \\
$\M$
	& a countable class of environments \\
$\nu, \mu, \rho$
	& environments from $\M$, i.e., functions $\nu: \H \times \A \to \Delta\E$;
	  $\mu$ is the true environment \\
$\xi$
	& Bayesian mixture over all environments in $\M$ \\
\end{xtabular*}

\section{Omitted Proofs}
\label{app:omitted-proofs}

Let $P$ and $Q$ be two probability distributions.
We say $P$ is \emph{absolutely continuous with respect to $Q$} ($P \ll Q$)
iff $Q(E) = 0$ implies $P(E) = 0$ for all measurable sets $E$.
If $P \ll Q$ then there is a function $dP/dQ$ called
\emph{Radon-Nikodym derivative} such that
\[
\int f dP = \int f \frac{dP}{dQ} dQ
\]
for all measurable functions $f$.
This function $dP/dQ$ can be seen as a density function of $P$
with respect to the background measure $Q$.

\begin{proof}[Proof of \autoref{lem:value-bound}]
Let $P$, $R$, and $Q$ be probability measures with $P \ll Q$ and $R \ll Q$
(we can take $Q := P/2 + R/2$),
let $dP/dQ$ and $dR/dQ$ denote their Radon-Nikodym derivative with respect to $Q$,
and let $X$ denote a random variable with values in $[0, 1]$.
Then
\begin{align*}
      \int X dP - \int X dR
&=    \int \left( X \frac{dP}{dQ} - X \frac{dR}{dQ} \right) dQ \\
&\leq \int_A X \left( \frac{dP}{dQ} - \frac{dR}{dQ} \right) dQ \\
\intertext{with $A := \left\{ x \,\middle|\, \frac{dP}{dQ}(x) - \frac{dR}{dQ}(x) \geq 0 \right\}$}
&\leq \int_A \left( \frac{dP}{dQ} - \frac{dR}{dQ} \right) dQ \\
&=    P(A) - R(A) \\
&\leq \sup_A | P(A) - R(A) |
 =    D(P, R)
\end{align*}
From this also follows $\int X dR - \int X dP \leq D(R, P)$,
and since $D$ is symmetric we get
\begin{equation}\label{eq:tvd-bound}
     \left| \int X dP - \int X dR \right|
\leq D(P, R).
\end{equation}

According to \autoref{def:value-function},
the value function is the expectation of the random variable
$\sum_{k=t}^m \gamma_k r_k / \Gamma_t$ that is bounded between $0$ and $1$.
Therefore we can use \eqref{eq:tvd-bound} with
$P := \nu^{\pi_1}(\,\cdot \mid \ae_{<t})$ and
$R := \rho^{\pi_2}(\,\cdot \mid \ae_{<t})$ on the space $(\A \times \E)^m$
of the histories of length $\leq m$
to conclude that
$|V^{\pi_1,m}_\nu(\ae_{<t}) - V^{\pi_2,m}_\rho(\ae_{<t})|$
is bounded by $D_m(\nu^{\pi_1}, \rho^{\pi_2} \mid \ae_{<t})$.
\end{proof}

\begin{proof}[Proof of \autoref{lem:info-gain}]
From Blackwell-Dubins' theorem~\cite{BD:1962}
we get
$D_\infty(\mu^\pi, \xi^\pi \mid \ae_{<t}) \to 0$
$\mu^\pi$-almost surely,
and since $D$ is bounded, this convergence also occurs in mean.
Thus for every environment $\nu \in \M$,
\begin{equation}\label{eq:KL-to-0}
\EE_\nu^\pi \big[ D_\infty(\nu^\pi, \xi^\pi \mid \ae_{<t}) \big] \to 0
\text{ as $t \to \infty$}.
\end{equation}
Now
\begingroup
\allowdisplaybreaks
\begin{align*}
&\phantom{=}~\;
      \EE_\mu^\pi[F^\pi_\infty(\ae_{<t})] \\
&\leq \frac{1}{w(\mu)} \EE_\xi^\pi[F^\pi_\infty(\ae_{<t})] \\
&=    \frac{1}{w(\mu)} \EE_\xi^\pi \left[
        \sum_{\nu \in \M} w(\nu \mid \ae_{<t})
          D_\infty(\nu^\pi, \xi^\pi \mid \ae_{<t})
      \right] \\
&=    \frac{1}{w(\mu)} \EE_\xi^\pi \left[
        \sum_{\nu \in \M} w(\nu) \frac{\nu^\pi(\ae_{<t})}{\xi^\pi(\ae_{<t})}
          D_\infty(\nu^\pi, \xi^\pi \mid \ae_{<t})
      \right] \\
&=    \frac{1}{w(\mu)} \sum_{\nu \in \M} w(\nu) \EE_\nu^\pi \big[
        D_\infty(\nu^\pi, \xi^\pi \mid \ae_{<t})
      \big]
 \to  0
\end{align*}
\endgroup
by \cite[Lem.~5.28ii]{Hutter:2005} since total variation distance is bounded.
\end{proof}

\begin{proof}[Proof of \autoref{lem:value-and-regret}]
By \assref{itm:gamma(a)}
we have $\gamma_t > 0$ for all $t$
and hence $\Gamma_t > 0$ for all $t$.
By \assref{itm:gamma(b)}
have that $\gamma$ is monotone decreasing,
so we get for all $n \in \mathbb{N}$
\[
     \Gamma_t
=    \sum_{k=t}^\infty \gamma_k
\leq \sum_{k=t}^{\mathclap{t+n-1}} \gamma_t
     + \sum_{\mathclap{k=t+n}}^\infty \gamma_k
=    n\gamma_t + \Gamma_{t+n}.
\]
And with $n := H_t(\eps)$ this yields
\begin{equation}\label{eq:t-independent-bound}
     \frac{\gamma_t H_t(\eps)}{\Gamma_t}
\geq 1 - \frac{\Gamma_{t+H_t(\eps)}}{\Gamma_t}
\geq 1 - \eps
>    0.
\end{equation}
In particular, this bound holds for all $t$ and $\eps > 0$.

Next, we define a series of nonnegative weights $(b_t)_{t \geq 1}$ such that
\[
  \sum_{t=t_0}^m d_k
= \sum_{t=t_0}^m \frac{b_t}{\Gamma_t} \sum_{k=t}^m \gamma_k d_k.
\]
This yields the constraints
\[
\sum_{k=t_0}^t \frac{b_k}{\Gamma_k} \gamma_t = 1 \quad \forall t \geq t_0.
\]
The solution to these constraints is
\begin{equation}\label{eq:b_t-solution}
b_{t_0} = \frac{\Gamma_{t_0}}{\gamma_{t_0}},
\text{ and }
b_t = \frac{\Gamma_t}{\gamma_t} - \frac{\Gamma_t}{\gamma_{t-1}}
\text{ for $t > t_0$}.
\end{equation}
Thus we get
\begin{align*}
      \sum_{t=t_0}^m b_t
&=    \frac{\Gamma_{t_0}}{\gamma_{t_0}}
      + \sum_{t=t_0+1}^m \left(
          \frac{\Gamma_t}{\gamma_t} - \frac{\Gamma_t}{\gamma_{t-1}}
        \right) \\
&=    \frac{\Gamma_{m+1}}{\gamma_m}
      + \sum_{t=t_0}^m \left(
          \frac{\Gamma_t}{\gamma_t} - \frac{\Gamma_{t+1}}{\gamma_t}
        \right) \\
&=    \frac{\Gamma_{m+1}}{\gamma_m} + m - t_0 + 1 \\
&\leq \frac{H_m(\varepsilon)}{1 - \varepsilon} + m - t_0 + 1
\end{align*}
for all $\varepsilon > 0$ according to \eqref{eq:t-independent-bound}.

Finally,
\begin{align*}
      \sum_{t=1}^m d_t
&\leq \sum_{t=1}^{t_0} d_t
      + \sum_{t=t_0}^m \frac{b_t}{\Gamma_t} \sum_{k=t}^m \gamma_k d_k \\
&\leq t_0
      + \sum_{t=t_0}^m \frac{b_t}{\Gamma_t} \sum_{k=t}^\infty \gamma_k d_k
      - \sum_{t=t_0}^m \frac{b_t}{\Gamma_t} \sum_{k=m+1}^\infty \gamma_k d_k \\
\intertext{
and using the assumption \eqref{eq:ao} and $d_t \geq -1$,
}
&<    t_0
      + \sum_{t=t_0}^m b_t \varepsilon
      + \sum_{t=t_0}^m \frac{b_t \Gamma_{m+1}}{\Gamma_t} \\
&\leq t_0 + \frac{\varepsilon H_m(\varepsilon)}{1 - \varepsilon}
      + \varepsilon (m - t_0 + 1)
      + \sum_{t=t_0}^m \frac{b_t \Gamma_{m+1}}{\Gamma_t}
\end{align*}
For the latter term we substitute \eqref{eq:b_t-solution} to get
\begin{align*}
     \sum_{t=t_0}^m \frac{b_t \Gamma_{m+1}}{\Gamma_t}
&=   \frac{\Gamma_{m+1}}{\gamma_{t_0}} + \sum_{t=t_0 + 1}^m
     \left( \frac{\Gamma_{m+1}}{\gamma_t} - \frac{\Gamma_{m+1}}{\gamma_{t-1}} \right) \\
&=   \frac{\Gamma_{m+1}}{\gamma_m}
\leq \frac{H_m(\eps)}{1 - \eps}
\end{align*}
with \eqref{eq:t-independent-bound}.
\end{proof}

\end{document}